\newcommand{\bfp}{{\bf p}}
\newcommand{\bfq}{{\bf q}}
\newcommand{\bfn}{{\bf n}}
\newcommand{\bfx}{{\bf x}}
\newcommand{\bfy}{{\bf y}}
\newcommand{\bfd}{{\bf d}}
\newcommand{\bfk}{{\bf k}}
\newcommand{\bfv}{{\bf v}}
\newcommand{\bfu}{{\bf u}}
\newcommand{\bfeta}{\boldsymbol{\eta}}
\newcommand{\bfphi}{\boldsymbol{\phi}}
\newcommand{\bfpsi}{\boldsymbol{\psi}}
\newcommand{\dd}{\,{\rm d}}
\newcommand*\bdot{\mathpalette\bdot@{.65}}
\newcommand*\bdot@[2]{\mathbin{\vcenter{\hbox{\hspace*{0.03em}\scalebox{#2}{$\m@th#1\bullet$}\hspace*{0.03em}}}}}
\newtheorem{theorem}{Theorem}
\newtheorem{lemma}{Lemma}
\newenvironment{manualtheorem}[1]{%
  \manualtheoreminner
}{\endmanualtheoreminner}
\title{Transformer Meets \\ Boundary Value Inverse Problems}
\author{%
  Ruchi Guo
\\
  Department of Mathematics\\
  University of California, Irvine
  \And
  Shuhao Cao \\
  Division of Computing, Analytics, and Mathematics \\
  School of Science and Engineering\\
  University of Missouri-Kansas City
  \And
  Long Chen \\
  Department of Mathematics\\
  University of California, Irvine\\
}
\begin{document}

\maketitle

\begin{abstract}
A Transformer-based deep direct sampling method is proposed for electrical impedance tomography, a well-known severely ill-posed nonlinear boundary value inverse problem. A real-time reconstruction is achieved by evaluating the learned inverse operator between carefully designed data and the reconstructed images. An effort is made to give a specific example to a fundamental question: whether and how one can benefit from the theoretical structure of a mathematical problem to develop task-oriented and structure-conforming deep neural networks? Specifically, inspired by direct sampling methods for inverse problems, the 1D boundary data in different frequencies are preprocessed by a partial differential equation-based feature map to yield 2D harmonic extensions as different input channels. Then, by introducing learnable non-local kernels, the direct sampling is recast to a modified attention mechanism. The new method achieves superior accuracy over its predecessors and contemporary operator learners and shows robustness to noises in benchmarks. 
This research shall strengthen the insights that, despite being invented for natural language processing tasks, the attention mechanism offers great flexibility to be modified in conformity with the a priori mathematical knowledge, which ultimately leads to the design of more physics-compatible neural architectures. 
\end{abstract}

\section{Introduction}

Boundary value inverse problems aim to recover the internal structure or distribution of multiple media inside an object (2D reconstruction) 
based on only the data available on the boundary (1D signal input), 
which arise from many imaging techniques, e.g., electrical impedance tomography (EIT) \citep{holder2004electrical}, 
diffuse optical tomography (DOT) \citep{2003CulverChoeHolbokeZubkovDurduran}, magnetic induction tomography (MIT) \citep{1999GriffithsStewartGough}. 
Not needing any internal data renders these techniques generally non-invasive, safe, cheap, and thus quite suitable for monitoring applications. 

In this work, we shall take EIT as an example to illustrate how a more structure-conforming neural network architecture leads to better results in certain physics-based tasks. 
Given a 2D bounded domain $\Omega$ and an inclusion $D$, the forward model is the following partial differential equation (PDE)
\begin{equation}
\label{ell_forward}
\nabla \cdot (\sigma\nabla u) = 0 \quad \text{in}~ \Omega, 
\quad \text{ where }
\;
\sigma=\sigma_1 \; \text{in} ~ D, 
\text{ and }
\; \sigma = \sigma_0  \;\text{in} ~ \Omega\backslash \overline{D},
\end{equation}
where $\sigma$ is a piecewise constant function defined on $\Omega$ with known function values $\sigma_0$ and $\sigma_1$, but the shape of the inclusion $D$ buried in $\Omega$ is unknown.
The goal is to recover the shape of $D$ using only the boundary data on $\partial \Omega$ (Figure \ref{fig:flow}). 
Specifically, by exerting a current $g$ on the boundary, one solves \eqref{ell_forward} with the Neumann boundary condition $\sigma \nabla u \cdot \bfn |_{\partial \Omega} = g$, where $\bfn$ is the outwards unit normal direction of $\partial \Omega$, 
to get a unique $u$ on the whole domain $\Omega$. 
In practice, only the Dirichlet boundary value representing the voltages $f = u|_{\partial \Omega}$ on the boundary can be measured. 
This procedure is called Neumann-to-Dirichlet (NtD) mapping:
\begin{equation}
\label{NtD_map}
\Lambda_{\sigma}: H^{-1/2}(\partial\Omega) \rightarrow H^{1/2}(\partial\Omega), ~~~ \text{with} ~~ g = \sigma \nabla u \cdot \bfn |_{\partial \Omega}
\mapsto f = u|_{\partial\Omega}.
\end{equation}
For various notation and the Sobolev space formalism, we refer readers to Appendix \ref{appendix:notations}; 
for a brief review of the theoretical background of EIT we refer readers to Appendix \ref{eit_background}. The NtD map above in \eqref{NtD_map} can be expressed as
\begin{equation}
    \label{NtD_map_discr}
\mathbf{f} = \mathbf{A}_{\sigma} \mathbf{g},
\end{equation}
where $\mathbf{g}$ and $\mathbf{f}$ are (infinite-dimensional) vector representations of functions $g$ and $f$ relative to a chosen basis, and $\mathbf{A}_{\sigma}$ is the matrix representation of $\Lambda_{\sigma}$ (see Appendix \ref{eit_background} for an example).

The original mathematical setup of EIT is to use the NtD map $\Lambda_{\sigma}$ in \eqref{NtD_map} to recover $\sigma$, 
referred to as the case of full measurement \citep{2006Calderon}. 
In this case, the forward and inverse operators associated with EIT can be formulated as 
\begin{equation}
\label{inverse_forward_operator}
\mathcal{F}~:~ \sigma \mapsto \Lambda_{\sigma}, ~~~ \text{and} ~~~ \mathcal{F}^{-1}~:~ \Lambda_{\sigma} \mapsto \sigma.
\end{equation}
Fix a set of basis $\{g_l\}^{\infty}_{l=1}$ of the corresponding Hilbert space containing all admissible currents. Then, mathematically speaking, ``knowing the operator $\Lambda_{\sigma}$'' means that one can measure all the current-to-voltage pairs $\{g_l, f_l := \Lambda_{\sigma}g_l \}^{\infty}_{l=1}$ and construct the infinite-dimensional matrix $\mathbf{A}_{\sigma}$.

However, as infinitely many boundary data pairs are not attainable in practice, 
the problem of more practical interest is to use only a few data pairs $\{(g_l, f_l )\}^L_{l=1}$ for reconstruction. 
In this case, the forward and inverse problems can be formulated as 
\begin{align}
 \mathcal{F}_L:  \sigma \mapsto \{ (g_1, \Lambda_{\sigma} g_1),  ..., (g_L, \Lambda_{\sigma} g_L) \} ~~~~~ \text{and} ~~~~~ \mathcal{F}^{-1}_L:  \{ (g_1, \Lambda_{\sigma} g_1), ..., (g_L, \Lambda_{\sigma} g_L) \} \mapsto \sigma. \label{model_inverse}
\end{align}
For limited data pairs, the inverse operator $\mathcal{F}^{-1}_L$ is extremely ill-posed or even not well-defined \citep{1990IsakovPowell,1994BarcelFabes,2001KangSeo,lionheart2004eit}; 
namely, the same boundary measurements may correspond to different $\sigma$.
In view of the matrix representation $\mathbf{A}_{\sigma}$, 
for $\mathbf{g}_l = \mathbf{ e}_l$, $l=1,...,L$, with $\mathbf{ e}_l$ being unit vectors of a chosen basis, 
$(\mathbf{ f}_1,...,\mathbf{ f}_L)$ only gives the first $L$ columns of $\mathbf{A}_{\sigma}$. 
It is possible that two matrices $\mathbf{A}_{\sigma}$ and $\mathbf{A}_{\tilde\sigma}$ have similar first $L$ columns but $\|\sigma - \tilde \sigma\|$ is large. 

How to deal with this ``ill-posedness'' is a central theme in boundary value inverse problem theories. 
The operator learning approach has the potential to tame the ill-posedness by restricting $\mathcal{F}^{-1}_L$  at a set of sampled data $\mathbb{D}:=\{ \sigma^{(k)}\}_{k=1}^N$, with different shapes and locations following certain distribution. Then the problem becomes to approximate
\begin{equation}
\label{model_inverse_2}
\mathcal{F}^{-1}_{L,\mathbb{D}}:  \{ (g_1, \Lambda_{\sigma^{(k)}} g_1), ..., (g_L, \Lambda_{\sigma^{(k)}} g_L) \} \mapsto \sigma^{(k)}, \quad k= 1,\dots,N.
\end{equation}
The fundamental assumption here is that this map is ``well-defined'' enough to be regarded as a high-dimensional interpolation (learning) problem on a compact data submanifold \citep{2019SeoKimJargalEtAllearning,2021GhattasWillcoxLearning}, and the learned approximate mapping can be evaluated at newly incoming $\sigma$'s.  The incomplete information of $\Lambda_{\sigma}$ due to a small $L$ for one single $\sigma$ is compensated by a large $N\gg 1$ sampling of different $\sigma$'s.

\begin{figure}[t]
  \centering
  \includegraphics[width=0.95\textwidth]{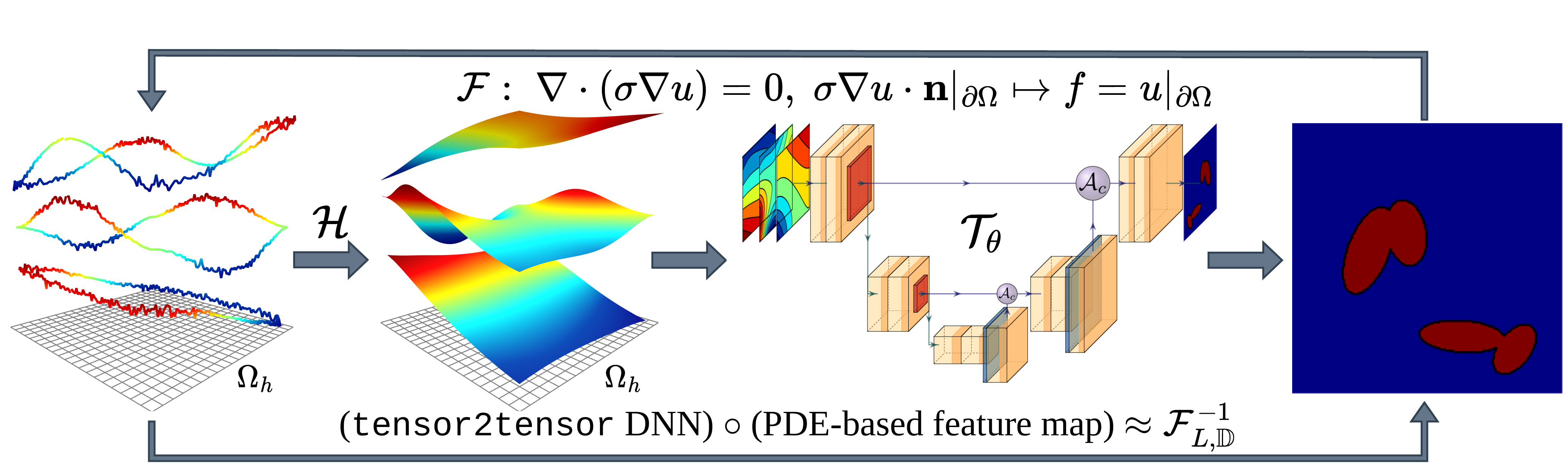}
  \caption{Schematics of the pipeline: the approximation of the inverse operator is decomposed into (i) a PDE-based feature map $\mathcal{H}$: the 1D boundary data is extended to 2D features by a harmonic extension; (ii) a tensor2tensor neural network $\mathcal{T}_{\theta}$ that outputs the reconstruction. }
    \label{fig:flow}
\end{figure}

\section{Background, related work, and contributions}

\paragraph{Classical iterative methods.}
There are in general two types of methodology to solve inverse problems. 
The first one is a large family of iterative or optimization-based methods~\citep{dobson1994image,martin1997fem,2003ChanTai,1999VauhkonenVauhkonenSavolainen,GuoLinLin2017,
2001RondiSantosa,2020ChenLiangZou,2020BaoYeZangZhou,2021GuLiuSmylEtAlSupershape}. 
One usually looks for an approximated $\sigma$ by solving a minimization problem with a regularization $\mathcal{R}(\sigma)$ to alleviate the ill-posedness, say
\begin{equation}
\label{IP_mini}
\textstyle\inf_{\sigma}\left\{ \sum\nolimits_{l=1}^L \| \Lambda_{\sigma}g_l - f_l \|^2_{\partial\Omega} + \mathcal{R}(\sigma) \right\}.
\end{equation}
The design of regularization $\mathcal{R}(\sigma)$ plays a critical role in a successful reconstruction \citep{2008TarvainenVauhkonenArridge,2012TehraniMcEwanSchaik,2012WangWangZhang}.
Due to the ill-posedness, 
the computation for almost all iterative methods usually takes numerous iterations to converge, and the reconstruction is highly sensitive to noise. 
Besides, the forward operator $\mathcal{F}(\cdot)$ needs to be evaluated at each iteration, 
which is itself expensive as it requires solving forward PDE models. 

\paragraph{Classical direct methods.}
The second methodology is to develop a well-defined mapping $\mathcal{G}_{\theta}$ parametrized by $\theta$, 
empirically constructed to approximate the inverse map itself, say $\mathcal{G}_{\theta} \approx \mathcal{F}^{-1}$. 
These methods are referred to as non-iterative or direct methods in the literature. 
Distinguished from iterative approaches, direct methods are typically highly problem-specific, as they are designed based on specific mathematical structures of their respective inverse operators.
For instance, methods in EIT and DOT include factorization methods \citep{2008KirschGrinberg,2007MustaphaMartin,2001Brhl,2003HankeBrhl}, 
MUSIC-type algorithms \citep{2001Cheney,2004AmmariKang,2007AmmariKang,2011LeeKim}, 
and the D-bar methods \citep{2007KnudsenLassasMuellerSiltanen,2009KnudsenLassasMuellerSiltanen} based on a Fredholm integral equation \citep{1996Nachman}, 
among which are the direct sampling methods (DSM) being our focus in this work \citep{chow2014direct,chow2015direct,2018KangLambertPark,2013ItoJinZou,2019JiLiuZhang,2019HarrisKleefeld,2020AhnHaPark,2021ChowHanZoudirect,2022HarrisNguyenNguyenDirect}. 
These methods generally have a closed-form $\mathcal{G}_{\theta}$ for approximation, 
and the parameters $\theta$ represent model-specific mathematical objects.
For each fixed $\theta$, this procedure is usually much more stable than iterative approaches with respect to the input data. 
Furthermore, the evaluation for each boundary data pair is distinctly fast, as no optimization is needed. 
However, a simple closed-form $\mathcal{G}_{\theta}$ admitting efficient execution may not be available in practice since some mathematical assumptions and derivation may not hold. For instance, MUSIC-type and D-bar methods generally require an accurate approximation to $\Lambda_{\sigma}$, while DSM poses restrictions on the boundary data, domain geometry, etc., see Appendix \ref{append:trans_derive} for details.

\vspace*{-5pt}
\paragraph{Boundary value inverse problems.}

For most cases of boundary value inverse problems in 2D, the major difference, e.g., with an inverse problem in computer vision \citep{1987MarroquinMitterPoggioProbabilistic},
is that data are only available on 1D manifolds, which are used to reconstruct 2D targets. 
When comparing \eqref{inverse_forward_operator} with a linear inverse problem in signal processing $\mathbf{y} = A\mathbf{x}+\boldsymbol{\epsilon}$,  
to recover a signal $\bfx$ from measurement $\bfy$ with noise $\boldsymbol{\epsilon}$, the difference is more fundamental in that $\mathcal F(\cdot)$ itself is highly nonlinear and involves boundary value PDEs.
Moreover, the boundary data themselves generally involve certain input-output structures (NtD maps), which adds more complexity.
In \citet{adler1994neural,2018XoseGomezIgnacio,2018FengSunLiSun}, boundary measurements are collected and directly input into feedforward fully connected networks.
As the data reside on different manifolds, 
special treatments are made to the input data, 
such as employing pre-reconstruction stages to generate rough 2D input to CNNs \citep{2018BenBenTaiebShokoufi,2020RenSunTanDong,2021PakravanMistaniAragonCalvoEtAlSolving}.

\vspace*{-5pt}
\paragraph{Deep neural network and inverse problems.}
Solving an inverse problem is essentially to give a satisfactory approximation to $\mathcal{F}^{-1}$ but based on finitely many measurements.
The emerging deep learning (DL) based on Deep Neural Networks (DNN) to directly emulate operators significantly resembles those classical direct methods mentioned above.
However, operator learners by DNNs are commonly considered black boxes. 
A natural question is how the a priori mathematical knowledge can be exploited to design more physics-compatible DNN architectures.
In pursuing the answer to this question, we aim to provide a supportive example that bridges deep learning techniques and classical direct methods, which improves the reconstruction of EIT.

\vspace*{-5pt}
\paragraph{Operator learners.} Operator learning has become an active research field for inverse problems in recent years, 
especially related to image reconstruction where CNN plays a central role, 
see, e.g., \citet{klosowski2017using,2018NguyenNehmetallah,tan2018image,2017JinMcCannFrousteyEtAlDeep,2017KangMin,2019BarbastathisOzcanSitu,Xiao2019LatifImranTu,2018ZhuLiuCauleyRosen,Chen;Tachella;Davies:2021Equivariant,Coxson;Mihov;Wang;Avramov:2022Machine,2023ZhuHeZhangEtAlFV}. 
Notable examples of efforts to couple classical reconstruction methods and CNN include \citet{hamilton2019beltrami,hamilton2018deep}, where a CNN post-processes images obtained by the classical D-bar methods, and
\citet{2019FanBohorquezYing,fan2020solving} where BCR-Net is developed to mimic pseudo-differential operators appearing in many inverse problems. 
A deep direct sampling method is proposed in \citet{2020GuoJiang,2021JiangLiGuo} that learns local convolutional kernels mimicking the gradient operator of DSM. 
Another example is radial basis function neural networks seen in \citet{2018HrabuskaPrauzekVenclikova,michalikova2014image,2021WangLiuWuWangZhang}. 
Nevertheless, convolutions in CNNs use kernels whose receptive fields involve only a small neighborhood of a pixel. Thus, layer-wise, CNN does not align well with the non-local nature of inverse problems.
More recently, the learning of PDE-related forward problems using global kernel has gained attraction, most notably the Fourier Neural Operator (FNO) \citep{NelsenStuart2021random,2020LiKovachkiAzizzadenesheliLiu,2021KovachkiLanthalerMishrauniversal,2022GuibasMardaniLiEtAlAdaptive,2022ZhaoGeorgeZhangEtAlIncremental,2022WenLiAzizzadenesheliEtAlU,2022LiLiuSchiaffiniKovachkiEtAlLearning}. 
FNO takes advantage of the low-rank nature of certain problems and learns a local kernel in the frequency domain yet global in the spatial-temporal domain, mimicking the solution's kernel integral form. Concurrent studies include DeepONets \citep{2021LuJinPang,2021WangWangPerdikarisLearning,2022JinMengLuMIONet}, Transformers \citep{Cao2021transformer,2022KissasSeidmanGuilhotoEtAlLearning,2022LiMeidaniFarimaniTransformer,2022LiuXuZhangHT,2023FonsecaZappalaCaroEtAlContinuous}, Integral Autoencoder \citep{2022OngShenYangIntegral}, Multiwavelet Neural Operators \citep{GuptaXiaoBogdan2021Multiwavelet,2022GuptaXiaoBalanEtAlNon}, and others \citep{2022LuetjensCrawfordWatsonEtAlMultiscale,2022HuMengChenEtAlNeural,2022BoussifBengioBenabbouEtAlMAgNet,2022HoopHuangQianEtAlCost,2022HoopLassasWongDeep,2022RyckMishraGeneric,2022SeidmanKissasPerdikarisEtAlNOMAD,2023ZhangMengZhuEtAlMonte,2023LeeMesh,2023ZhuHeHuangEnhanced}.

\vspace*{-5pt}
\paragraph{Related studies on Transformers.}
The attention mechanism-based models have become state of the art in many areas since \citet{Vaswani.Shazeer.ea2017Attention}. 
One of the most important and attractive aspects of the attention mechanism is its unparalleled capability to efficiently model non-local long-range interactions \citep{KatharopoulosVyasPappasEtAl2020Transformers,ChoromanskiLikhosherstovEtAl2021Rethinking,NguyenSuliafuOsherChenEtAl2021FMMformer}. 
The relation of the attention with kernel learning is first studied in \citet{TsaiBaiEtAl2019Transformer} and later connected with random feature \citep{PengPappasYogatamaSchwartzEtAl2021Random}. 
Connecting the non-PDE-based integral kernels and the attention mechanism has been seen in \citet{2021HutchinsonLeLanZaidiEtAlLietransformer,2022GuibasMardaniLiEtAlAdaptive,2022NguyenPhamNguyenEtAlFourierFormer,2022HanRenNguyenEtAlRobustify}. 
Among inverse problems, Transformers have been applied in medical imaging applications, including segmentation \citep{2021ZhouGuoZhangYuWang,2022HatamizadehTangYangMyronenkoLandman,2021PetitThomeRambourEtAlU}, X-Ray \citep{2022TanziAudisioCirrincione}, magnetic resonance imaging (MRI) \citep{2022HeGrantOu}, ultrasound \citep{2021PereraAdhikariYilmaz}, optical coherence tomography (OCT) \citep{2021SongFuLiXiong}. 
To our best knowledge, no work in the literature establishes an architectural connection between the attention mechanism in Transformer and the mathematical structure of PDE-based inverse problems.

\subsection{Contributions} 

\begin{itemize}[topsep=0pt, leftmargin=1em]
\item \emph{A structure-conforming network architecture.} Inspired by the EIT theory and classic DSM, we decompose the approximation of the inverse operator into a harmonic extension and an integral operator with learnable non-local kernels that has an attention-like structure. 
Additionally, the attention architecture is reinterpreted through a Fredholm integral operator to rationalize the application of the Transformer to the boundary value inverse problem.

\item \emph{Theoretical and experimental justification for the advantage of Transformer.} 
We have proved that, in Transformers, modified attention can represent target functions exhibiting higher frequency natures from lower frequency input features. 
A comparative study in the experiments demonstrates a favorable match between the Transformer and the benchmark problem.

\end{itemize} 

\section{Interplay between mathematics and neural architectures}

In this section, we try to articulate that the triple tensor product in the attention mechanism matches exceptionally well with representing a solution in the inverse operator theory of EIT.
In pursuing this end goal, this study tries to answer the following motivating questions:
 \begin{itemize}[topsep=-2pt, leftmargin=2.5em, itemsep=0pt]
\item[(\textbf{Q1})] \label{Q1} What is an appropriate finite-dimensional data format as inputs to the neural network?
\item[(\textbf{Q2})] \label{Q2} Is there a suitable neural network matching the mathematical structure?
\end{itemize}

\subsection{From EIT to Operator Learning}
\label{subsec:eit2ol}

In the case of full measurement, the operator $\mathcal{F}^{-1}$ can be well approximated through a large number of $(\sigma, \mathbf{A}_{\sigma})$ data pairs.
This mechanism essentially results in a \emph{tensor2tensor} mapping/operator from $\mathbf{A}_{\sigma}$ to the imagery data representing $\sigma$.
In particular, the BCR-Net \citep{fan2020solving} is a DNN approximation falling into this category.
However, when there are very limited boundary data pairs accessible, the task of learning the full matrix $\mathbf{A}_{\sigma}$ becomes obscure, which complicates the development of a \emph{tensor2tensor} pipeline for operator learning.

\paragraph{Operator learning problems for EIT.}
\label{paragraph:op-eit} 
We first introduce several attainable approximations of infinite-dimensional spaces by finite-dimensional counterparts for the proposed method.

\begin{enumerate}[topsep=0pt, itemsep=0.5pt, leftmargin=2em, label={(\arabic*)}]
\item \emph{Spatial discretization.} Let $\Omega_h$ be a mesh of $\Omega$ with the mesh spacing $h$ and let $\{z_j\}_{j=1}^M:= \mathcal{M}$ be the set of grid points to represent the 2D discretization of continuous signals. 
Then a function $u$ defined almost everywhere in $\Omega$ can be approximated by a vector $\mathbf{u}_h\in \mathbb R^{M}$. 

\item \emph{Sampling of $D$.} We generate $N$ samples of $D$ with different shapes and locations following certain distributions. For example, elliptical inclusions with random semi-axes and centers are generated as a benchmark (see Appendix \ref{appendix:data-gen} for details). With the known $\sigma_0$ and $\sigma_1$, set the corresponding data set $\mathbb{D}=\{ \sigma^{(1)},\sigma^{(2)},..., \sigma^{(N)} \} $.
$N$ is usually large enough to represent field applications of interest.  

\item \emph{Sampling of NtD maps.} For the $k$-th sample of $D$, we generate $L$ pairs of boundary data $\{(g^{(k)}_l,f^{(k)}_l)\}_{l=1}^L$ by solving PDE \eqref{ell_forward}, which can be thought of as sampling of columns of the infinite matrix $\mathbf{A}_{\sigma}$ representing the NtD map.
By the proposed method, $L$ can be chosen to be very small ($\leq 3$) to yield satisfactory results. 

\end{enumerate}
Our task is to find a parameterized mapping $\mathcal{G}_{\theta}$ to approximate $\mathcal{F}^{-1}_{L,\mathbb{D}}$ \eqref{model_inverse_2} by minimizing
\begin{equation}
\label{loss_fun}
\textstyle
\mathcal{J}(\theta) := 
\frac{1}{N} \sum\nolimits_{k=1}^N  \|\mathcal{G}_{\theta}(\{(g^{(k)}_l,f^{(k)}_l)\}_{l=1}^L) - \sigma^{(k)} \|^2,
\end{equation}
for a suitable norm $\|\cdot\|$. Hyper-parameters $N, h, L$ will affect the finite-dimensional approximation to the infiniteness in the following way: 
$h$ determines the resolution to approximate $D$; 
$N$ affects the representativity of the training data set;  
$L$ decides how much of a finite portion of the infinite spectral information of $\Lambda_{\sigma}$ can be accessed.

\subsection{From harmonic extension to tensor-to-tensor}
\label{sec:harmonic_ext}

To establish the connection between the problem of interest with the attention used in the Transformers, we first construct higher-dimensional tensors from the 1D boundary data. The key is a harmonic extension of the boundary data that can be viewed as a PDE-based feature map. We begin with a theorem to motivate it.

Let $\mathcal{I}^D$ be the characteristic function of $D$ named \emph{index function}, i.e., $\mathcal{I}^D(x)=1$ if $x\in D$ and $\mathcal{I}^D(x)=0$ if $x\notin D$. 
Thus, $\sigma$ can be directly identified by the shape of $D$ through the formula $\sigma = \sigma_1 \mathcal{I}^D + \sigma_0 (1-\mathcal{I}^D)$. 
In this setup, reconstructing $\sigma$ is equivalent to reconstructing $\mathcal{I}^D$.

Without loss of generality, we let $\sigma_1>\sigma_0$. $\Lambda_{\sigma_0}$ is understood as the NtD map with $\sigma=\sigma_0$ on the whole domain, i.e., it is taken as the known background conductivity (no inclusion), and thus $\Lambda_{\sigma_0}g$ can be readily computed. 
Then $ f - \Lambda_{\sigma_0}g = (\Lambda_{\sigma} - \Lambda_{\sigma_0} )g$ measures the difference between the NtD mappings and encodes the information of $\sigma$.
The operator $\Lambda_{\sigma}-\Lambda_{\sigma_0}$ is positive definite, and it has eigenvalues $\{\lambda_l\}^{\infty}_{l=1}$ with $\lambda_1>\lambda_2>\cdots >0$~\citep{cheng1989electrode}. 

\begin{theorem}
\label{thm_uniq}
Suppose that the 1D boundary data $g_l$ is the eigenfunction of $\Lambda_{\sigma}-\Lambda_{\sigma_0}$ corresponding to the $l$-th eigenvalue $\lambda_l$, and 
let the 2D data functions $\phi_l$ be obtained by solving
\begin{equation}
\label{PhiN}
 -\Delta \phi_l  = 0 \quad \text{in} \quad \Omega, \quad  \bfn\cdot\nabla\phi_l = (f_l - \Lambda_{\sigma_0}g_l)  \quad \text{on} \quad \partial \Omega, \quad \textstyle\int_{\partial \Omega} \phi_l \dd s = 0 ,
\end{equation}
for $l=1,2,\dots$. Define the space $\widetilde{\mathbb{S}}_L = \emph{Span}\{ \partial_{x_1}\phi_l ~ \partial_{x_2}\phi_l: ~ l=1,\dots,L\} $,
and the dictionary $\mathbb{S}_L = \{ a_1 + a_2 \arctan(a_3 v)\, : \, v \in \widetilde{\mathbb{S}}_L , ~ a_1,a_2,a_3 \in \mathbb{R}\}$.
Then, for any $\epsilon>0$, we con construct an index function $\mathcal{I}^D_L\in\mathbb{S}_L$ s.t. 
$\sup\nolimits_{x\in \Omega}| \mathcal{I}^D(x) - \mathcal{I}^D_L(x) | \le \epsilon$ provided $L$ is large enough. 
\end{theorem}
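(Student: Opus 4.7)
The plan is to reduce the approximation of $\mathcal{I}^D$ to a two-stage construction: first exhibit a harmonic (or bi-harmonic-product) function $v \in \widetilde{\mathbb{S}}_L$ that separates $D$ from $\Omega\setminus\overline{D}$ by sign, and then use the arctan nonlinearity as a smoothed Heaviside to turn $v$ into an approximate characteristic function. The key preliminary observation is that since $g_l$ is an eigenfunction of $\Lambda_\sigma-\Lambda_{\sigma_0}$ with eigenvalue $\lambda_l$, the Neumann data in \eqref{PhiN} simplifies to $f_l - \Lambda_{\sigma_0}g_l = \lambda_l g_l$, so each $\phi_l$ is a harmonic extension of an eigenfunction scaled by its eigenvalue, and the derivatives $\partial_{x_i}\phi_l$ are themselves harmonic. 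Thus $\widetilde{\mathbb{S}}_L$ is a family of functions that encode, in their nodal structure, the geometric information about $D$ transmitted through the NtD contrast.

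Assuming the existence of a suitable $v \in \widetilde{\mathbb{S}}_L$ with $v \geq c > 0$ on a subset $D_\delta \subset D$ and $v \leq -c < 0$ on a subset $(\Omega\setminus\overline{D})_\delta$, each being the complement of a $\delta$-neighborhood of $\partial D$, the arctan step is routine. Choosing $a_1 = 1/2$, $a_2 = 1/\pi$, and $a_3$ sufficiently large gives
\begin{equation*}
\bigl| a_1 + a_2 \arctan(a_3 v(x)) - \mathcal{I}^D(x) \bigr| \leq \frac{2}{\pi a_3 c}
\end{equation*}
on $D_\delta \cup (\Omega\setminus\overline{D})_\delta$, which can be made smaller than $\epsilon/2$ by taking $a_3$ large. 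Crucially the sup norm in the statement must be interpreted modulo a shrinking transition strip of width $\delta$ around $\partial D$ (literal uniform convergence to a discontinuous function is of course impossible with continuous approximants), and this strip must be absorbed into $\epsilon$ by choosing $\delta$ small; this is controlled through $L$ by widening the set on which the sign bound $|v|\geq c$ holds.

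The technical heart is producing the sign-separating $v$ in $\widetilde{\mathbb{S}}_L$ as $L\to\infty$. Here I would appeal to: (i) spectral completeness of $\{g_l\}$ in $H^{-1/2}(\partial\Omega)$, which follows since $\Lambda_\sigma-\Lambda_{\sigma_0}$ is compact, self-adjoint, and positive definite; (ii) continuity and stability of the Neumann-to-interior harmonic extension $g \mapsto \phi$; and (iii) the classical DSM/Runge-type principle (referenced in Appendix A of the paper) that harmonic extensions of NtD-difference data concentrate their gradients around the inclusion, so linear combinations of the $\partial_{x_i}\phi_l$ can be arranged to approximate a prescribed harmonic target whose nodal set tracks $\partial D$. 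A concrete candidate target is a harmonic function that is strictly positive on $D$ and strictly negative outside (obtained, e.g., as a layer potential or a truncated fundamental solution placed appropriately), which can then be approximated to arbitrary accuracy on compact subsets of $\Omega$ by elements of $\widetilde{\mathbb{S}}_L$ once $L$ is large. The main obstacle I anticipate is precisely this simultaneous control: one must jointly ensure that (a) the approximation error of $v$ is small in $L^\infty$ on $D_\delta \cup (\Omega\setminus\overline{D})_\delta$, and (b) the lower bound $|v|\geq c$ survives that approximation error, with the transition strip $\delta$ shrinking consistently as $L$ grows. Balancing these three small parameters ($\delta$, the approximation error of $v$, and $1/a_3$) against $\epsilon$ is the delicate bookkeeping that underlies the ``$L$ large enough'' clause in the statement.
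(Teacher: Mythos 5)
Your high-level plan (find a function in $\widetilde{\mathbb{S}}_L$ that separates $D$ from its complement, then compose with $\arctan$ as a smoothed step) is not what the paper does, and the step you yourself flag as the ``technical heart'' --- producing the sign-separating $v\in\widetilde{\mathbb{S}}_L$ --- is both unproved and, as sketched, unfixable. Your concrete candidate target, ``a harmonic function that is strictly positive on $D$ and strictly negative outside,'' cannot exist: such a function would vanish on $\partial D$ and be positive in $D$, violating the maximum principle on $D$ for any function harmonic in all of $\Omega$; a layer potential or truncated fundamental solution is harmonic only away from $\overline{D}$ or its pole, and Runge-type density gives approximation only on compact subsets of the region of harmonicity, which excludes exactly the set $D$ you need. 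Moreover $\widetilde{\mathbb{S}}_L$ is spanned by the \emph{products} $\partial_{x_1}\phi_l\,\partial_{x_2}\phi_l$ of derivatives of specific harmonic extensions --- these are not harmonic functions, and no density of this family in any convenient target class is available or invoked in the paper. A symptom of the gap is that your argument never uses the eigenvalues $\lambda_l$ except to rewrite the Neumann data as $\lambda_l g_l$; but the eigenvalue decay is precisely what carries the inside/outside information.

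The paper's proof uses a different mechanism entirely: the factorization-method blow-up dichotomy. By the cited Lemma (Theorem 4.1 of Guo--Jiang, going back to Br\"uhl and Hanke--Br\"uhl), the weighted quadratic sum $\Theta_L(x)=\sum_{l=1}^{L}\lambda_l^{-3}\,(\mathbf{d}\cdot\nabla\phi_l(x))^2$ stays bounded as $L\to\infty$ for $x\in D$ and diverges to $+\infty$ for $x\notin D$. No sign separation is needed: $\Theta_L\ge 0$ everywhere, and the separation is ``bounded versus huge.'' Taking $\rho$ an upper bound on $D$ and $L$ large enough that $\Theta_L>4\rho\epsilon^{-2}/\pi^2$ off $D$, the paper sets $\mathcal{I}^D_L=1-\tfrac{2}{\pi}\arctan\big(\tfrac{\pi\epsilon}{2\rho}\Theta_L\big)$ and concludes via the elementary inequality $z>\arctan z\ge \tfrac{\pi}{2}-z^{-1}$ for $z>0$; the smallness of the coefficient $\pi\epsilon/(2\rho)$ handles the inside, and the divergence handles the outside. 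Your observation about the transition strip is a fair criticism of the theorem's literal sup-over-$\Omega$ statement (a continuous $\Theta_L$ cannot satisfy both uniform bounds up to $\partial D$), but the paper absorbs this into the uniformity asserted in the cited Lemma rather than into an exceptional set; in any case it does not rescue the missing construction in your approach.
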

The full proof of Theorem \ref{thm_uniq} can be found in Appendix \ref{appendix:proof-thm-uniq}. 
This theorem gives a constructive approach for approximating $\mathcal F^{-1}$ and justifies the practice of approximating $\mathcal{I}^D$ when $L$ is large enough. 
The function $\phi_l$ is called the {\em harmonic extension} of $f_l - \Lambda_{\sigma_0}g_l$. 

On the other hand, it relies on ``knowing'' the entire NtD map $\Lambda_{\sigma}$ to construct $\mathcal{I}^D_L$ explicitly. Namely, the coefficients of $\partial_{x_1}\phi_l ~ \partial_{x_2}\phi_l$ depend on a big chunk of spectral information (eigenvalues and eigenfunctions) of $\Lambda_{\sigma}$, which may not be available in practice. Thus, the mathematics itself in this theorem does not provide an architectural hint on building a structure-conforming DNN.

To further dig out the hidden structure, we focus on the case of a single measurement, i.e., $L=1$. 
With this setting, it is possible to derive an explicit and simple formula to approximate $\mathcal{I}^D$ which is achieved by the classical direct sampling methods (DSM) \citep{chow2014direct,chow2015direct,2018KangLambertPark,2013ItoJinZou,2019JiLiuZhang,2019HarrisKleefeld,2020AhnHaPark}.
For EIT,
\begin{equation}
\label{index_fun}
\mathcal{I}^D(x) \approx \mathcal{I}^D_1(x) := 
R(x) 
\left(\bfd(x) \cdot \nabla\phi(x) \right)
\quad x\in\Omega, \,\bfd(x) \in \mathbb{R}^2 ,
\end{equation}
is derived in \citet{chow2014direct}, where (see a much more detailed formulation in Appendix \ref{append:trans_derive})
\begin{itemize}[topsep=0pt, itemsep=0.5pt, leftmargin=2em]
\item  $\phi$ is the harmonic extension of $f- \Lambda_{\sigma_0}g$ with certain noise $f - \Lambda_{\sigma_0}g+\xi$;
\item $\bfd(x)$ is called a probing direction and can be chosen empirically as $\bfd(x) = {\nabla\phi(x)}/{\|\nabla\phi(x)\|} $;
\item $R(x)=\left(\| f - \Lambda_{\sigma_0}g \|_{\partial \Omega} | \eta_{x} |_{Y}\right)^{-1}$, where $\eta_x$ is a function of $\bfd(x)$ and measured in $|\cdot|_{Y}$ semi-norm on  boundary $\partial\Omega$. 
\end{itemize}

Both $\phi$ and $\eta_x$ can be computed effectively by traditional fast PDE solvers, such as finite difference or finite element methods based on $\Omega_h$ in Section \ref{paragraph:op-eit}.
However, the reconstruction accuracy is much limited by a single measurement, the nonparametric ansatz, and empirical choices of $\bfd(x)$ and $|\cdot|_Y$. These restrictions leave room for DL methodology. See Appendix \ref{append:trans_derive} for a detailed discussion.

Constructing harmonic extension (2D features) from boundary data (1D signal input with limited depth) can contribute to the desired high-quality reconstruction. First, harmonic functions are highly smooth away from the boundary, of which the solution automatically smooths out the noise on the boundary due to PDE theory \citep[Chapter 8]{2001GilbargTrudingerElliptic}, and thus make the reconstruction highly robust with respect to the noise (e.g., see Figure \ref{fig:phi-n0n20-comparison} in Appendix \ref{appendix:data-gen}).
Second, in terms of using certain backbone networks to generate features for downstream tasks, harmonic extensions can be understood as a problem-specific way to design higher dimensional feature maps \citep{2012AlvarezRosascoLawrenceKernels},
which renders samples more separable in a higher dimensional data manifold than the one with merely boundary data. See Figure \ref{fig:flow} to illustrate this procedure.

The information of $\sigma$ is deeply hidden in $\phi$. As shown in Figure \ref{fig:flow} (see also Appendix \ref{appendix:experiment}), one cannot observe any pattern of $\sigma$ directly from $\phi$. It is different from and more challenging than the inverse problems studied in \citep{2021BhattacharyaHosseini,2021KhooLuYingSolving} that aim to reconstruct 2D targets from the much more informative 2D internal data of $u$.

In summary, both Theorem \ref{thm_uniq} and the formula of DSM \eqref{index_fun} offer inspiration to give a potential answer to \hyperref[Q1]{(Q1)}: 
the \emph{harmonic extension} $\phi$ (2D features) of $f-\Lambda_{\sigma_0}g$ (1D measurements) naturally encodes the information of the true characteristic function $\mathcal{I}_D$ (2D targets). As there is a pointwise correspondence between the harmonic extensions and the targets at 2D grids, a tensor representation of $\nabla \phi_l$ at these grid points can then be used as the input to a \emph{tensor2tensor}-type DNN to learn $\mathcal{I}_D$. Naturally, the grids are set as the positional embedding explicitly. In comparison, the positional information is buried more deeply in 1D measurements. As shown in Figure \ref{fig:flow}, $\mathcal{G}_{\theta}$ can be nicely decoupled into a composition of a learnable neural network operator $\mathcal{T}_{\theta}$ and a non-learnable PDE-based feature map $\mathcal{H}$, i.e., $\mathcal{G}_{\theta} = \mathcal{T}_{\theta}\circ\mathcal{H}$. The architecture of $\mathcal{T}_{\theta}$ shall be our interest henceforth.

\subsection{From channels in attention to basis in integral transform}
\label{sec:atten}

In this subsection, a modified attention mechanism is proposed as the basic block in the tensor2tensor-type mapping introduced in the next two subsections. 
Its reformulation conforms with one of the most used tools in applied mathematics: the integral transform. 
In many applications such as inverse problems, the interaction (kernel) does not have any explicit form, which meshes well with DL methodology philosophically. 
In fact, this is precisely the situation of the EIT problem considered.%

Let the input of an encoder attention block be 
$\mathbf{x}_h \in \mathbb{R}^{M\times c}$ with $c$ channels, then the query $Q$, key $K$, value $V$ are generated by three learnable projection matrices $\theta:=\{W^Q, W^K, W^V\} \subset \mathbb{R}^{c\times c}$: $\diamond = \mathbf{x}_h  W^{\diamond} $, $\diamond\in\{Q,K,V\}$.
Here $c\gg L$ is the number of expanded channels for the latent representations. A modified dot-product attention is proposed as follows: 
\begin{equation}
\label{eq:attention}
    U = \text{Attn} (\mathbf{x}_h) := \alpha \left({\rm nl}_Q(Q) {\rm nl}_K (K)^\top  \right) V =
\alpha  \big(\widetilde{Q} \widetilde{K}^\top  \big) V\in \mathbb{R}^{M\times c},
\end{equation}
where ${\rm nl}_Q(\cdot)$ and ${\rm nl}_K(\cdot)$ are two learnable normalizations. Different from \citet{NguyenSalazar2019Transformers,XiongYangEtAl2020layer}, this pre-inner-product normalization is applied right before the matrix multiplication of query and key. This practice takes inspiration from the normalization in the index function kernel integral \eqref{index_fun} and \eqref{index_fun3}, see also \citet{2001Boyd} where the normalization for orthogonal bases essentially uses the (pseudo)inverse of the Gram matrices. In practice, layer normalization \citep{2016BaKirosHintonLayer} or batch normalization \citep{2015IoffeSzegedyBatch} is used as a cheap alternative. Constant $\alpha=h^2$ is a mesh-based weight such that the summation becomes an approximation to an integral.

To elaborate these rationales, the $j$-th column of the $i$-th row $U_i$ of $U$ is $(U_i)^j = \alpha \,A_{i \bdot} \cdot V^j$, in which the $i$-th row $A_{i \bdot} = (\widetilde{Q} \widetilde{K}^\top )_{i \bdot}$ and the $j$-th column $V^j:= V_{\bdot j}$.
Thus, applying this to every column $1\leq j\leq c$, attention \eqref{eq:attention} becomes a basis expansion representation for the $i$-th row $U_i$
\begin{equation}
\label{eq:attention-expansion}
 U_i = 
\alpha\, A_{i \bdot} 
\begin{pmatrix}
\rule[.4ex]{2em}{0.2pt} & V_1  &\rule[.4ex]{2em}{0.2pt}
\\[-4pt]
\rule[.8ex]{2em}{0.3pt} & \vdots & \rule[.8ex]{2em}{0.3pt}
\\[-1pt]
\rule[.5ex]{2em}{0.2pt}&  V_M &  \rule[.5ex]{2em}{0.2pt}
\end{pmatrix}
= \sum\nolimits_{m=1}^M \alpha\, A_{im} V_m =: \sum\nolimits_{m=1}^M 
\mathcal{A}(Q_i, K_m) \, V_m.
\end{equation}
Here, $\alpha A_{i \bdot}$ contains the coefficients for the linear combination of $\{V_m\}_{m=1}^M$. 
This set $\{V_m\}_{m=1}^M$ forms the $V$'s row space, and it further forms each row of the output $U$ by multiplying with $A$.
$\mathcal{A}(\cdot, \cdot)$ in \eqref{eq:attention-expansion} stands for the attention kernel, which aggregates the pixel-wise feature maps to measure how the projected latent representations interact.
Moreover, the latent representation $U_i$ in an encoder layer is spanned by the row space of $V$ and is being nonlinearly updated cross-layer-wise. 

For $\bfx_h, U, Q, K, V$, a set of feature maps are assumed to exist: for example $u(\cdot)$ maps $\mathbb{R}^2 \to \mathbb{R}^{1\times c}$, i.e., $U_i = u(z_i) = [u_1(z_i), \cdots, u_c(z_i)]$, e.g., see \citet{ChoromanskiLikhosherstovEtAl2021Rethinking}, then an instance-dependent kernel ${\kappa}_{\theta}(\cdot, \cdot): \mathbb{R}^2 \times \mathbb{R}^2\to \mathbb{R}$ can be defined by
\begin{equation}
\mathcal{A} (Q_i, K_j)
:= \alpha \langle \widetilde{Q}_i, \widetilde{K}_j \rangle
= \alpha \langle q(z_i), k(z_j) \rangle
=: \alpha \, {\kappa}_{\theta}(z_i, z_j).
\end{equation}
Now the discrete kernel $\mathcal{A}(\cdot,\cdot)$ with tensorial input is rewritten to this kernel ${\kappa}_{\theta}(\cdot,\cdot)$, thus the dot-product attention is expressed as a nonlinear integral transform for the $l$-th channel:
\begin{equation}
\label{eq:attention-kernel}
u_l(z)  = \textstyle  \alpha  \sum\nolimits_{x\in \mathcal{M}} \big(q(z) \cdot k(x)\big) v_l(x) \, \delta_{x}
\approx \int_{\Omega} {\kappa}_{\theta}(z, x) v_l(x) \,d\mu(x), \quad 1\leq l\leq c.
\end{equation}
Through certain minimization such as \eqref{loss_fun}, the backpropagation updates $\theta$, which further leads a new set of latent representations. This procedure can be viewed as an iterative method to update the basis residing in each channel by solving the Fredholm integral equation of the first kind in \eqref{eq:attention-kernel}. 

To connect attention with inverse problems, the multiplicative structure in a kernel integral form for attention \eqref{eq:attention-kernel} is particularly useful. 
\eqref{eq:attention-kernel} is a type of Pincherle-Goursat (degenerate) kernels \citep[Chapter 11]{1999KressLinear} and approximates the full kernel using only a finite number of bases. The number of learned basis functions in expansion \eqref{eq:attention-expansion} depends on the number of channels $n$.
Here we show the following theorem; heuristically, it says that: given enough but finite channels of latent representations, the attention kernel integral can ``bootstrap'' in the frequency domain, that is, generating an output representation with higher frequencies than the input.  Similar approximation results are impossible for layer-wise propagation in CNN if one opts for the usual framelet/wavelet interpretation \citep{2018YeHanChaDeep}. For example, if there are no edge-like local features in the input (see for empirical evidence in Figure \ref{fig:latents-cnn} and Figure \ref{fig:latents-uit}), a single layer of CNN filters without nonlinearity cannot learn weights to extract edges. The full proof with a more rigorous setting is in Appendix \ref{appendix:proof-thm-approx}.

\begin{theorem}[Frequency bootstrapping]
\label{thm:approximability}
Suppose there exists a channel $l$ in $V$ such that $(V_i)_l = \sin(a z_i)$ for some $a\in \mathbb{Z}^+$, the current finite-channel sum kernel $\mathcal{A}(\cdot,\cdot)$ approximates a non-separable kernel to an error of $O(\epsilon)$ under certain norm $\|\cdot\|_{X}$. Then, there exists a set of weights such that certain channel $k'$ in the output of \eqref{eq:attention-expansion} approximates $\sin(a' z)$, $\mathbb{Z}^+ \ni a'>a$ with an error of $O(\epsilon)$ under the same norm.
\end{theorem}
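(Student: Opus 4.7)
The plan is to show that the Pincherle--Goursat (sum-of-products) form of the modified attention kernel in \eqref{eq:attention-kernel} has enough representational flexibility to both approximate a prescribed non-separable target kernel and, simultaneously, drive a low-frequency value channel into a higher-frequency output. I would carry this out by an explicit weight construction followed by a kernel-to-output error transfer estimate.

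First, recall that $\kappa_\theta(z,x)=\sum_{k=1}^{c} q_k(z)\,k_k(x)$, where $q_k$ and $k_k$ are coordinates of the projected features $\widetilde{Q},\widetilde{K}$ after the learnable normalizations $\mathrm{nl}_Q,\mathrm{nl}_K$. I would fix one channel index $j^\star$ and tune the corresponding rows of $W^Q$ and $W^K$ so that $q_{j^\star}(z)=\sin(a'z)$ and $k_{j^\star}(x)=C^{-1}\sin(a x)$ with $C:=\int_{\Omega}\sin^2(a x)\,\mathrm{d}\mu(x)>0$. Designate the output channel $k'=l$, so that $V^{k'}$ is the $\sin(a x)$-valued column guaranteed by hypothesis; then the $j^\star$-th term of the expansion \eqref{eq:attention-expansion} contributes exactly $\sin(a'z)$ to $u_{k'}(z)$ by the orthogonality-type identity $C^{-1}\int_\Omega \sin(ax)\sin(ax)\,\mathrm{d}\mu(x)=1$.

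Second, I would exhibit a genuinely non-separable target kernel compatible with this construction,
\[
\mathcal{K}(z,x)=C^{-1}\sin(a'z)\sin(a x)+h(z,x),
\]
where $h$ is non-degenerate in both variables (e.g. $h(z,x)=r(z)\cos(b x)+\tilde r(z)\sin(\tilde b x)$ with $b,\tilde b \neq a$) and, crucially, orthogonal to $\sin(ax)$ in the sense $\int_\Omega h(z,x)\sin(a x)\,\mathrm{d}\mu(x)\equiv 0$. Under the hypothesis that the finite-channel $\kappa_\theta$ approximates $\mathcal{K}$ to $O(\epsilon)$ in $\|\cdot\|_X$, I would transfer this to the output via a H\"older/Cauchy--Schwarz estimate:
\[
\bigl\| u_{k'}(\cdot) - \sin(a'\,\cdot) \bigr\|_X
=\Bigl\|\int_{\Omega}\!\bigl[\kappa_\theta(\cdot,x)-\mathcal{K}(\cdot,x)\bigr]\sin(a x)\,\mathrm{d}\mu(x)\Bigr\|_X
\leq C_\Omega\,\|\kappa_\theta-\mathcal{K}\|_X\,\|\sin(a\,\cdot)\|_{L^\infty(\Omega)} = O(\epsilon),
\]
with $C_\Omega$ depending only on $|\Omega|$ and the norm $X$. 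A standard quadrature estimate absorbs the $O(h^2)$ gap between the Riemann sum $\alpha\sum_x$ in \eqref{eq:attention-expansion} and the integral in \eqref{eq:attention-kernel} into the same $O(\epsilon)$ budget.

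The principal obstacle is proving that the projections together with the learnable normalizations can indeed realize $q_{j^\star}(z)=\sin(a'z)$ and $k_{j^\star}(x)=C^{-1}\sin(a x)$ from the input tensor $\mathbf{x}_h$. This requires (i) enough positional/spectral richness in $\mathbf{x}_h$, which is supplied by the grid-based positional embedding bundled with the harmonic-extension feature map $\mathcal{H}$, and (ii) a universal-approximation-style lemma for the linear span of the rows/columns of $W^Q,W^K$ modulo the non-linear normalization; the rigorous version of (ii) is what will occupy the bulk of the appendix proof. A subordinate bookkeeping issue is that one channel among $c$ must be reserved for the bootstrapping term while the remaining $c-1$ channels carry the approximation of $h$; since $c\gg L$ in the assumed regime and $h$ is chosen to have rapidly decaying Schmidt decomposition, this is a mild constraint rather than an essential one.
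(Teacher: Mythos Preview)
Your proposal has a genuine gap exactly where you flag the ``principal obstacle.'' You require $q_{j^\star}(z)=\sin(a'z)$ as a query feature, but queries are obtained as $\mathbf{x}_h W^Q$ followed by a pointwise-affine normalization. Linear maps cannot raise frequency content, and layer/batch normalization is a per-position affine rescaling, so from smooth input channels (harmonic extensions plus Euclidean positional coordinates, as used in the paper) no choice of $W^Q$ and $\mathrm{nl}_Q$ can produce $\sin(a'z)$ with $a'>a$. The ``universal-approximation-style lemma'' you propose for (ii) therefore cannot hold in the form you need; indeed, if such a lemma were available the higher frequency would already sit in the query features and there would be nothing left for the attention integral to bootstrap. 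Asking $W^Q$ to manufacture the target frequency is circular with respect to what the theorem is meant to demonstrate.

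The paper's proof avoids this trap by a different construction. It fixes the concrete kernel $\kappa(x,z)=\sin\bigl((a{+}1)(x-z)\bigr)$, for which $\int_\Omega \kappa(x,z)\sin(ax)\,\mathrm{d}x=c_1\sin\bigl((a{+}1)z\bigr)$ by direct computation, and then Taylor-expands $\kappa$ in $z$ about a fixed $z_0$ to obtain a degenerate approximant $\kappa_N(x,z)=\sum_{l=1}^N q_l(z)k_l(x)$ with $q_l(z)=(z-z_0)^{l-1}/(l-1)!$ and $k_l(x)=\partial_z^{l-1}\kappa(x,z_0)$. The latent representation is taken to be the first $N$ Legendre polynomials, so the $q_l$ lie \emph{exactly} in the feature span and are realizable as $Q=\mathbf{p}_h W^Q$ with no frequency lift whatsoever; the $k_l$ and $v(x)=\sin(ax)$ are then approximated in that same polynomial span with the standard Legendre error. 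Neither $Q$ nor $K$ individually carries the target frequency $a{+}1$; it emerges only after the kernel integral against $v$. This is precisely the multiplicative mechanism the theorem is meant to exhibit, and your construction short-circuits it by inserting $\sin(a'z)$ directly into $Q$. As a minor side note, your $\mathcal{K}$, being a sum of three rank-one pieces, is itself degenerate, so the ``non-separable'' hypothesis is not genuinely exercised; the paper is admittedly loose with this term too, but its Taylor expansion---not the rank-two trigonometric decomposition of $\kappa$---is what is matched against the feature basis.
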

\vspace*{-0.5em}
The considered inverse problem is essentially to recover higher-frequency eigenpairs of $\Lambda_{\sigma}$ based on lower-frequency data, see, e.g., Figure \ref{fig:flow}. $\Lambda_{\sigma}$ together with all its spectral information can be determined by the recovered inclusion shape. Thus, the existence in Theorem \ref{thm:approximability} partially justifies the advantages of adopting the attention mechanism for the considered problem.

\subsection{From index function integral to Transformer}
\label{sec:index-map}
In \eqref{index_fun}, the probing direction $\bfd(x)$, the inner product $\bfd(x)\cdot\nabla\phi(x)$, and the norm $|\cdot|_Y$ are used as ingredients to form certain non-local instance-based learnable kernel integration. 
This non-localness is a fundamental trait for many inverse problems, in that $\mathcal{I}^D(x)$ depends on the entire data function. 
Then, the discretization of the modified index function is shown to match the multiplicative structure of the modified attention mechanism in \eqref{eq:attention}.

In the forthcoming derivations, $\mathcal{K}(x,y), \mathcal{Q}(x,y)$, and a self-adjoint positive definite linear operator $\mathcal{V}: L^{2}(\partial\Omega) \rightarrow L^{2}(\partial\Omega)$, 
are shown to yield the emblematic $Q$-$K$-$V$ structure of attention. 
To this end, we make the following modifications and assumptions to the original index function in \eqref{index_fun}.
\begin{itemize}[topsep=-2pt, itemsep=-0.5pt, leftmargin=1em]
\item The reformulation of the index function is motivated by the heuristics 
that the agglomerated global information of $\phi$ could be used as ``keys'' to locate a point $x$.
\begin{equation}
\label{index_fun2}
\hat{\mathcal{I}}^D_1(x) := 
R(x)
{ \int_{\Omega} \bfd(x) \cdot \mathcal{K}(x,y) \nabla\phi(y) \dd y}.
\end{equation}
If an ansatz $\mathcal{K}(x,y)=\delta_x(y)$ is adopted, then \eqref{index_fun2} reverts to the original one in \eqref{index_fun}. 
\item  The probing direction $\bfd(x)$ as ``query'' is reasonably assumed to have a global dependence on $\phi$
\begin{equation}
\label{dx}
\bfd(x) := \int_{\Omega} \mathcal{Q}(x,y) \nabla \phi(y) \dd y.
\end{equation}
If $\mathcal{Q}(x,y)=\delta_x(y)/\|\nabla \phi(x) \|$, then $\bfd(x) = {\nabla\phi(x)}/{\|\nabla\phi(x)\|} $ which is the choice of the probing direction in \citep{2000Ikehata1,2000IkehataSiltanen,2007Ikehata}.\\
\item In the quantity $R(x)$ in \eqref{index_fun}, the key is $|\cdot |_Y$ which is assumed to have the following form:
\begin{equation}
\label{eta_fun4}
| \eta_{x} |^2_Y := (\mathcal{V}\eta_x, \eta_x)_{L^2(\partial \Omega)}. %
\end{equation}
In \citet{chow2014direct}, it is shown that if $\mathcal{V}$ induces a kernel with sharply peaked Gaussian-like distribution, 
the index function in \eqref{index_fun} can achieve maximum values for points inside $D$.
\end{itemize}

Based on the assumptions from \eqref{index_fun2} to \eqref{eta_fun4}, we derive a matrix representation approximating the new index function on a grid, which accords well with an attention-like architecture. Denote by ${\bfphi}_n$: the vector that interpolates $\partial_{x_n} \phi$ at the grid points $\{z_j\}$, $n=1,2$.

Here, we sketch the outline of the derivation and present the detailed derivation in Appendix \ref{append:trans_derive}.
We shall discretize the variable $x$ by grid points $z_i$ in \eqref{index_fun2} and obtain an approximation to the integral:
\begin{equation}
\label{K1}
 \int_{\Omega}  \mathcal{K}(z_i,y) \partial_{x_n}\phi(y) \dd y \approx \sum\nolimits_j \omega_j   \mathcal{K}(z_i,z_j)\partial_{x_n} \phi(z_j) =: \bfk_i ^{T} {\bfphi}_n,
 \end{equation}
where $\{\omega_j\}$ are some integration quadrature weights. We then consider \eqref{dx} and focus on one component $d_n(x)$ of $\bfd(x)$. With a suitable approximated integral, it can be rewritten as
\begin{equation}
\label{dx2}
d_n(z_i) \approx \sum\nolimits_j \omega_j \mathcal{Q}(z_i,z_j) \partial_{x_n} \phi(z_j) =: \bfq_i^{T}  {\bfphi}_n.
\end{equation}
Note that the self-adjoint positive definite operator $\mathcal{V}$ in \eqref{eta_fun4} can be parameterized by a symmetric positive definite (SPD) matrix denoted by $V$. 
There exist vectors $\bfv_{n,i}$ such that $| \eta_{z_i} |^2_Y \approx \sum_n \bfphi^{T}_n \bfv_{n,i} \bfv^T_{n,i} {\bfphi}_n$. Then, the modified indicator function can be written as
\begin{equation}
\label{index_fun3}
\hat{\mathcal{I}}^D_1(z_i) \approx 
\Big\{\| f - \Lambda_{\sigma_0} g \|_{\partial\Omega}^{-1}
\big( \sum_n \bfphi^{T}_n \bfv_{n,i} \bfv^T_{n,i} {\bfphi}_n\big)^{-1/2} \Big\}
\sum_n \bfphi^{T}_n \bfq_i \bfk^{T}_i \bfphi_n.
\end{equation}
Now, using the notation from Section \ref{sec:atten}, we denote the learnable kernel matrices and an input vector: %
for $\diamond\in \{Q,K,V\}$, and $\mathbf{u}\in \{\bfq, \bfk, \bfv\}$
\begin{equation}
\label{WQKV}
W^{\diamond} = \left[\begin{array}{ccc} \bfu_{1,1} & \cdots & \bfu_{1,M} \\ 
\bfu_{2,1} & \cdots & \bfu_{2,M} \end{array}\right] \in \mathbb{R}^{2M \times M},~~~
\bfx_h = \left[\begin{array}{cc} \bfphi_1 & \bfphi_2 \end{array}\right] \in \mathbb{R}^{1 \times 2M }.
\end{equation}
Then, we can rewrite \eqref{index_fun3} as
\begin{equation}
\begin{split}
\label{index_fun4}
[\hat{\mathcal{I}}^D_1(z_i) ]_{i=1}^M  \approx C_{f,g}  (\bfx_hW^Q \ast \bfx_hW^K) / ( \bfx_hW^V \ast \bfx_hW^V )^{1/2}
\end{split}
\end{equation}
where $C_{f,g} = \| f - \Lambda_{\sigma_0}g \|^{-1}_{\partial\Omega}$ is a normalization weight, and both $\ast$ and $/$ are element-wise. Here, we may define $Q=\bfx_hW^Q$, $K=\bfx_hW^K$, and $V=\bfx_hW^V$ as the query, keys, and values. We can see that the right matrix multiplications \eqref{eq:attention} in the attention mechanism are low-rank approximations of the ones above. 
Hence, based on \eqref{index_fun4}, essentially we need to find a function $\mathbf{I}$ resulting in a vector approximation to the true characteristic function $\{\mathcal{I}^D(z_j)\}$
\begin{equation}
\label{index_fun5}
\mathbf{I} (Q,K,V) \approx [\mathcal{I}^D(z_i) ]_{i=1}^M .
\end{equation}
Moreover, when there are $L$ data pairs, the data functions $\phi_l$ are generated by computing their harmonic extensions as in \eqref{PhiN}. 
Then, each $\phi_l$ is then treated as a channel of the input data $\bfx_h$.

In summary, the expressions in \eqref{index_fun4} and \eqref{index_fun5} reveal that a Transformer may be able to generalize the classical non-parametrized DSM formula further in \eqref{index_fun} to non-local learnable kernels. Thus, it may have an intrinsic architectural advantage that handles multiple data pairs. In the subsequent EIT benchmarks, we provide a potential answer to the question \hyperref[Q2]{(Q2)}; 
namely, the attention architecture is better suited for the tasks of reconstruction, as it conforms better with the underlying mathematical structure. The ability to learn global interactions by attention, supported by a non-local kernel interpretation, matches the long-range dependence nature of inverse problems.

\section{Experiments}

In this section we present some experimental results to show the quality of the reconstruction. The benchmark contains sampling of inclusions of random ellipses (targets), and the input data has a \textbf{\emph{single}} channel ($L=1$) of the 2D harmonic extension feature from the 1D boundary measurements. The training uses 1cycle and a mini-batch ADAM for 50 epochs. The evaluated model is taken from the epoch with the best validation metric on a reserved subset.
There are several baseline models to compare: the CNN-based U-Nets \citep{RonnebergerFischerEtAl2015U,2020GuoJiang}; the state-of-the-art operator learner Fourier Neural Operator (FNO) \citep{2020LiKovachkiAzizzadenesheliLiu} and its variant with a token-mixing layer \citep{2022GuibasMardaniLiEtAlAdaptive}; MultiWavelet Neural Operator (MWO) \citep{GuptaXiaoBogdan2021Multiwavelet}. %
The Transformer model of interest is a drop-in replacement of the baseline U-Net, and it is named by U-Integral Transformer (UIT). UIT uses the kernel integral inspired attention \eqref{eq:attention}, and we also compare UIT with the linear attention-based Hybrid U-Transformer in \citet{GaoZhouEtAl2021UTNet}, as well as a Hadamard product-based cross-attention U-Transformer in \citet{WangXieEtAl2022Mixed}. 
An ablation study is also conducted by replacing the convolution layers in the U-Net with attention \eqref{eq:attention} on the coarsest level.
For more details of the hyperparameters' setup in the data generation, training, evaluation, network architectures please refer to Section \ref{subsec:eit2ol}, Appendix \ref{appendix:data-gen}, and Appendix \ref{appendix:network}.

The comparison result can be found in Table \ref{table:rel-L2}. Because FNO (AFNO, MWO) keeps only the lower modes in spectra, it performs relatively poor in this EIT benchmark where one needs to recover traits that consist of higher modes (sharp boundary edges of inclusion) from lower modes (smooth harmonic extension). Attention-based models are capable to recover ``high-frequency target from low-frequency data'', and generally outperform the CNN-based U-Nets despite having only $1/3$ of the parameters. Another highlight is that the proposed models are highly robust to noise thanks to the unique PDE-based feature map through harmonic extension. The proposed models can recover the buried domain under a moderately large noise (5\%) and an extreme amount of noise (20\%) which can be disastrous for many classical methods.

\begin{table}[htbp]
\caption{Evaluation metrics of the EIT benchmark tests. $\tau$: the normalized relative strength of noises added in the boundary data before the harmonic extension; see Appendix \ref{appendix:experiment} for details. $L^2$-error and cross entropy: the closer to 0 the better; Dice coefficient: the closer to 1 the better.}
\vspace*{-1em}
\label{table:rel-L2}
\begin{center}
\resizebox{\textwidth}{!}{%
\begin{tabular}{rcccccccccc}\toprule
& \multicolumn{3}{c}{Relative $L^2$ error} & \multicolumn{3}{c}{Position-wise cross entropy} & \multicolumn{3}{c}{Dice coefficient}
& \multirow{3}{*}{\# params} 
\\
\cmidrule(lr){2-4}
\cmidrule(lr){5-7}
\cmidrule(lr){8-10}
& $\tau=0$ & $\tau =0.05$ & $\tau=0.2$ 
& $\tau=0$ & $\tau =0.05$  & $\tau=0.2$ 
& $\tau=0$ & $\tau =0.05$ & $\tau=0.2$ 
\\
\midrule
U-Net baseline     &  $0.200$ & $0.341$  &  $0.366$   &  $0.0836$ & $0.132$  & $0.143$ & $0.845$ & $0.810$ & $0.799$ & 7.7m
\\
U-Net+Coarse Attn  & $0.184$ &  $0.343$ & $0.360$  &  $0.0801$ &  $0.136$  &  $0.147$ & $0.852$ & $0.807$& $0.804$
& 8.4m
\\
U-Net big     &  $0.195$ & $0.338$  &  $0.350$   &  $0.0791$ & $0.133$  & $0.138$ & $0.850$ & $0.812$ & $0.805$ & 31.0m
\\
FNO2d baseline      &  $ 0.318 $  & $0.492$  &  $0.502$   &   $0.396$  &  $0.467$ & $0.508$ 
& $0.650$ &$0.592$ & $0.582$ & 10.4m
\\
Adaptive FNO2d      &  $ 0.323 $  & $0.497$  &  $0.499$  &   $0.391$  &  $0.466$ & $0.471$ 
& $0.635$ &$0.595$ & $0.592$ & 10.9m
\\
FNO2d big      &  $ 0.386 $  & $0.482$  &  $0.501$   &   $0.310$  &  $0.465$ & $0.499$ 
& $0.638$ &$0.601$ & $0.580$ & 33.6m
\\
Multiwavelet NO    &  $ 0.275 $  & $0.390$  &  $0.407$   &  $0.152$  &  $0.178$ & $0.192$ 
& $0.715$ & $0.694$ & $0.688$ & 9.8m
\\
Hybrid UT  & $0.185$ &  $0.320$ & $0.333$  &  $0.0785$ &  $0.112$  &  $0.116$  & $0.877$ & $0.829$& $0.821$
& 11.9m
\\
Cross-Attention UT  & $0.171$ &  $0.305$ & $0.311$  &  $0.0619$ &  $0.105$  &  $0.109$  & $0.887$ & $0.840$& $0.829$
& 11.4m
\\
\textbf{UIT+Softmax} (ours)  & \textbf{0.159} &  \textbf{0.261} & \textbf{0.269}  &  \textbf{0.0551} &  \textbf{0.0969}  &  \textbf{0.0977}  & \textbf{0.903} & \textbf{0.862} & \textbf{0.848}
& 11.1m
\\
\textbf{UIT} (ours) & \textbf{0.163} &  \textbf{0.261} &  \textbf{0.272}  &  \textbf{0.0564} & \textbf{0.0967} &  \textbf{0.0981} & \textbf{0.897} & \textbf{0.858} & \textbf{0.845} & 11.4m
\\
\textbf{UIT+}$(L\!=\!3)$ (ours) & \textbf{0.147} &  \textbf{0.250} &  \textbf{0.254}  &  \textbf{0.0471} & \textbf{0.0882} &  \textbf{0.0900} & \textbf{0.914} & \textbf{0.891} & \textbf{0.880} & 11.4m
\\
\bottomrule
\end{tabular}
}
\end{center}
\end{table}

\section{Conclusion}
\label{sec:conclusion}

For a boundary value inverse problem, we propose a novel operator learner based on the mathematical structure of the inverse operator and Transformer. The proposed architecture consists of two components: the first one is a harmonic extension of boundary data (a PDE-based feature map), and the second one is a modified attention mechanism derived from the classical DSM by introducing learnable non-local integral kernels. 
The evaluation accuracy on the benchmark problems surpasses the current widely-used CNN-based U-Net and the best operator learner FNO. This research strengthens the insights that the attention is an adaptable neural architecture that can incorporate a priori mathematical knowledge to design more physics-compatible DNN architectures. However, we acknowledge some limitations: in this study, $\sigma$ to be recovered relies on a piecewise constant assumption. For many EIT applications in medical imaging and industrial monitoring, $\sigma$ may involve non-sharp transitions or even contain highly anisotropic/multiscale behaviors; see Appendix \ref{appendix:limitations} for more discussion on limitations and possible approaches.

\newpage

\section*{Acknowledgments}
L. Chen is supported in part by National Science Foundation grants DMS-1913080 and DMS-2012465, and DMS-2132710. S. Cao is supported in part by National Science Foundation grants DMS-1913080 and DMS-2136075. The hardware to perform the experiments are sponsored by NSF grants DMS-2136075, and UMKC School of Science and Engineering computing facilities. No additional revenues are related to this work. The authors would like to thank Ms. Jinrong Wei (University of California Irvine) for the proofreading and various suggestions on the manuscript. The authors would like to thank Dr. Jun Zou (The Chinese University of Hong Kong) and Dr. Bangti Jin (University College London \& The Chinese University of Hong Kong) for their comments on inverse problems. The authors also greatly appreciate the valuable suggestions and comments by the anonymous reviewers.

\section*{Reproducibility Statement}

This paper is reproducible. Experimental details about all empirical results described in this paper are provided in Appendix \ref{appendix:experiment}. Additionally, we provide the PyTorch \citep{PaszkeGrossMassaEtAl2019PyTorch} code for reproducing our results at \url{https://github.com/scaomath/eit-transformer}. 
The dataset used in this paper is available at \url{https://www.kaggle.com/datasets/scaomath/eletrical-impedance-tomography-dataset}. Formal proofs under a rigorous setting of all our theoretical results are provided in Appendices
\ref{appendix:proof-thm-uniq}-\ref{appendix:proof-thm-approx}.

\newpage
\bibliographystyle{iclr2023_conference}
{
\small

}

\newpage
\appendix
\section{Table of Notations}
\label{appendix:notations}

\begin{table}[htbp]
\caption{Notations used in an approximate chronological order and their meaning in this work.}
\label{table:notations}
\begin{center}
\resizebox{\textwidth}{!}{%
\begin{tabular}{cl}
\toprule
\textbf{Notation} &
\multicolumn{1}{c}{\textbf{Meaning}}
\\
\midrule
$\Omega$ & an underlying spacial domain in $\mathbb{R}^2$
\\
$D$ & a subdomain in $\Omega$ (not necessarily topologically-connected)
\\
$\partial D$, $\partial \Omega$ & $D$'s and $\Omega$'s boundary, $1$-dimensional manifolds
\\
$\nabla u$ & the gradient vector of a function, $\nabla u(x) = (\partial_{x_1}u(x), \partial_{x_2}u(x))$
\\
$\|\cdot\|_{\omega}$ & the $L^2$-norm on a region $\omega$ \\
$\|\cdot\| = \|\cdot\|_{\Omega}$ & the $L^2$-norm on whole domain $\Omega$ \\
$\delta_x$ & the delta function such that $\int_{\Omega}f(y)\delta_x(y)\dd y = f(x)$, $\forall f$.
\\
$\bfn$ & the unit outer normal vector on the boundary $\partial \Omega$.
\\
$\nabla u\cdot \bfn$ & normal derivative of $u$, measures the rate of change along the direction of $\bfn$
\\
$\Lambda_{\sigma}$ & NtD map from Neumann data $g:=\nabla u\cdot \bfn$
\\
~ & (how fast the solution changes toward the outward normal direction) to
\\
~ &  Dirichlet data $f:=u|_{\partial \Omega}$ (the solution's value along the tangential direction)
\\
$H^{s}(\Omega)$, $s\geq 0$ & the Sobolev space of functions 
\\
$H^{s}(\Omega)$, $s< 0$ & the bounded linear functional defined on $H^s(\Omega)$
\\
$H^{s}_0(\Omega)$ & all $u\in H^s(\Omega)$ such that $u$'s integral on $\Omega$ vanishes
\\
$|\cdot|_Y$ & the seminorm defined for functions in $Y$
\\
\bottomrule
\end{tabular}
}
\end{center}
\end{table}

\section{Background of EIT}
\label{eit_background}

For EIT, an immediate question is whether $\mathcal{F}^{-1}$ and $\mathcal{F}^{-1}_L$ in \eqref{inverse_forward_operator} and \eqref{model_inverse} are well-defined, namely whether $\sigma$ can be uniquely determined. In fact, for the case of full measurements ($L=\infty$), the uniqueness for $\mathcal{F}^{-1}$ has been well established, \citep{2001Brhl,2003HankeBrhl,2006Astala,1996Nachman,1984KohnVogelius,1987SylvesterUhlmann}. It is worthwhile to point out that, in this case, $\sigma$ is not necessarily a piecewise constant function. In \eqref{ell_forward}, we present a simplified case for purposes of illustrating as well as benchmarking. In general, with the full spectral information of the NtD map, $\sigma$ can be uniquely determined as a general positive function.

If infinitely many eigenpairs are known, then the operator itself can be precisely characterized using infinitely many feature channels by Reproducing Kernel Hilbert Space (RKHS) theory, e.g., \citet{1909MercerXvi,1950AronszajnTheory,2006MinhNiyogiYaoMercer,2015MorrisFunctional,2016KadriDuflosPreuxEtAlOperator,2022LuAnYuNonparametric}. In the context of EIT, this is known as the ``full measurement''. A more challenging and practical problem is to recover $\sigma$ from only finitely many boundary data pairs. A common practice for the theoretical study of reconstruction using finite measurements is the assumption of $\sigma$ being a piecewise constant function. The task is usually set to recover the shape and location of the inclusion $D$. Otherwise, the problem is too ill-posed. With finite measurements, the uniqueness of the inclusion remains a long-standing theoretical open problem, and it can be only established for several special classes of the inclusion shape, such as the convex cylinders in \citet{1990IsakovPowell} or convex polyhedrons in \citet{1994BarcelFabes}. We refer readers to some counter-examples in \citet{2001KangSeo} where a two- or three-dimensional ball may not be identified uniquely by one single measurement if the values of $\sigma_0$ and $\sigma_1$ are unknown.

Furthermore, here we provide one example to illustrate the difficulty in the reconstruction procedure \citep{1995PidcockKuzuogluLeblebicioglu}. Let $\Omega$ be a unit circle, let $D$ be a circle with the radius $\rho<1$, and define 
\begin{equation}
\label{sigma_special}
\sigma(x) = 
\begin{cases}
   1   & \text{if} ~ \| x \| \ge \rho , \\
   \sigma_1  & \text{if} ~ \| x \| < \rho ,
\end{cases}
\end{equation}
with $\sigma_1<1$ being an arbitrary constant. In this case, the eigenpairs of $\Lambda_{\sigma}$ can be explicitly calculated
\begin{equation}
\label{sigma_eigen}
\lambda_l = \frac{1}{l} \frac{ 1 - \rho^{2l}\mu }{ 1 + \rho^{2l}\mu }, ~~~~ \nu_m = \frac{1}{\sqrt{2\pi}} \cos(l\theta) ~~ \text{or} ~~  \frac{1}{\sqrt{2\pi}} \sin(l\theta),~~ l=1,2,...,
\end{equation}
with $\mu = (1-\sqrt{\sigma_1})/(1+\sqrt{\sigma_1})$, which are exactly the Fourier modes in a unit circle. In this case, if the set of basis $\{g_l\}^{\infty}_{l=1}$ is just chosen as $\{\cos(\theta), \sin(\theta), \cos(2\theta),  \sin(2\theta), ... \}$, the matrix representation $\mathbf{ A}_{\sigma}$ in \eqref{NtD_map_discr} can be written as an infinite diagonal matrix
\begin{equation}
\begin{split}
\label{matAsigma}
\mathbf{ A}_{\sigma} %
\left[\begin{array}{ccccc}
(1-\rho^2\mu)/(1+\rho^2\mu) & 0 & 0 & 0  \\
0 & (1-\rho^4\mu)/(1+\rho^4\mu) & 0 & 0  \\
0 & 0 & (1-\rho^4\mu)/(1+\rho^4\mu) & 0  \\
0 & 0 & 0 & \ddots %
\end{array}\right].
\end{split}
\end{equation}
Thanks to this special geometry, eigenvalues in \eqref{sigma_eigen} can clearly determine $\rho$ and $\sigma_1$ as follows:
\begin{equation}
\label{computesigma}
\rho = \sqrt{ \frac{(1-\lambda_2)(1+\lambda_1)}{(1+\lambda_2)(1-\lambda_1)} } ~~~~ \text{and} ~~~~ \mu = \frac{ (1-\lambda_1)^2(1+\lambda_2) }{ (1+\lambda_1)^2(1-\lambda_2) }.
\end{equation}
However, in practice, $\mathbf{A}_{\sigma}$ does not have such a simple structure. Approximating $\mathbf{A}_{\sigma}$ itself requires a large number of data pairs that are not available in the considered case. Besides, an accurate approximation of the eigenvalues of $\mathbf{A}_{\sigma}$ is also very expensive. Furthermore, for complex inclusion shapes, two eigenvalues are not sufficient to exactly recover the shape and conductivity values.

\section{Experiment Set-up}
\label{appendix:experiment}

\subsection{Data generation and training}
\label{appendix:data-gen}
In the numerical examples, the data generation mainly follows standard practice in theoretical prototyping for solving EIT problems, see e.g.,\citet{chow2014direct,michalikova2014image,hamilton2018deep,2020GuoJiang,fan2020solving}. For examples, please refer to Figure \ref{fig:inclusion}. The computational domain is set to be $\Omega := (-1,1)^2$, and the two media with the different conductivities are with $\sigma_1 = 10$ (inclusion) and $\sigma_0 = 1$ (background). The inclusions are four random ellipses. The lengths of the semi-major axis and semi-minor axis of these ellipses are sampled from $\mathcal{U}(0.1, 0.2)$ and $U(0.2, 0.4)$, respectively. The rotation angles are sampled from $\mathcal{U}(0,2\pi)$. There are 10800 samples in the training set, from which 20\% are reserved as validation. There are 2000 in the testing set for evaluation. 

\begin{figure}[ht]
\centering
\includegraphics[width=0.15\textwidth]{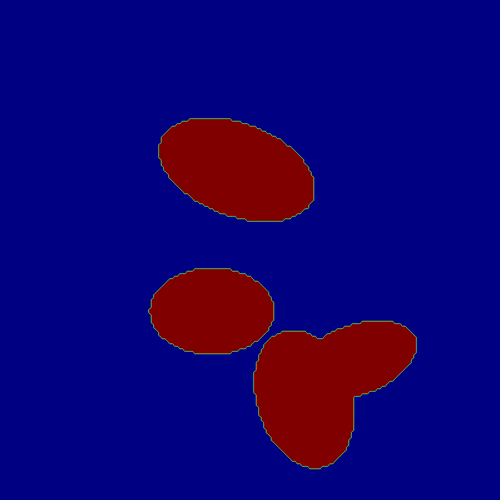}
\quad
\includegraphics[width=0.15\textwidth]{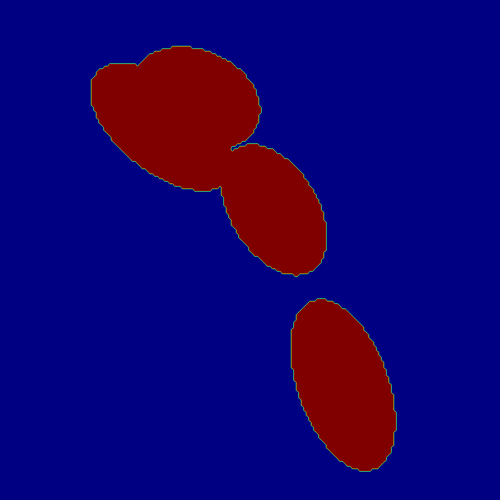}
\quad
\includegraphics[width=0.15\textwidth]{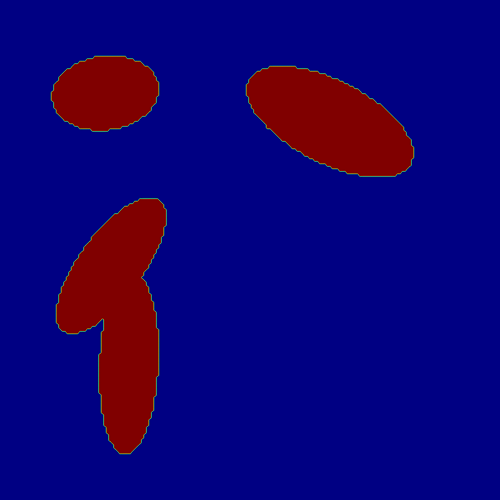}
\quad
\includegraphics[width=0.15\textwidth]{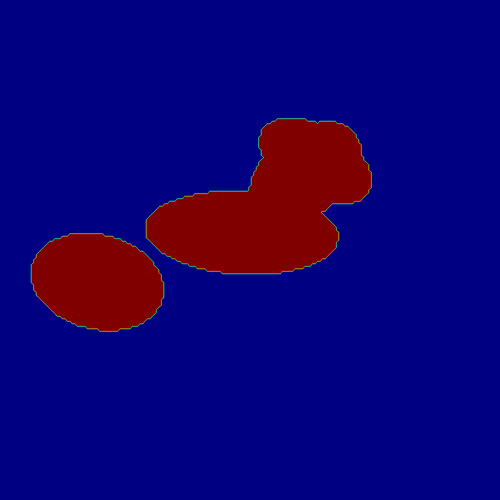}
\qquad
\includegraphics[width=0.17\textwidth]{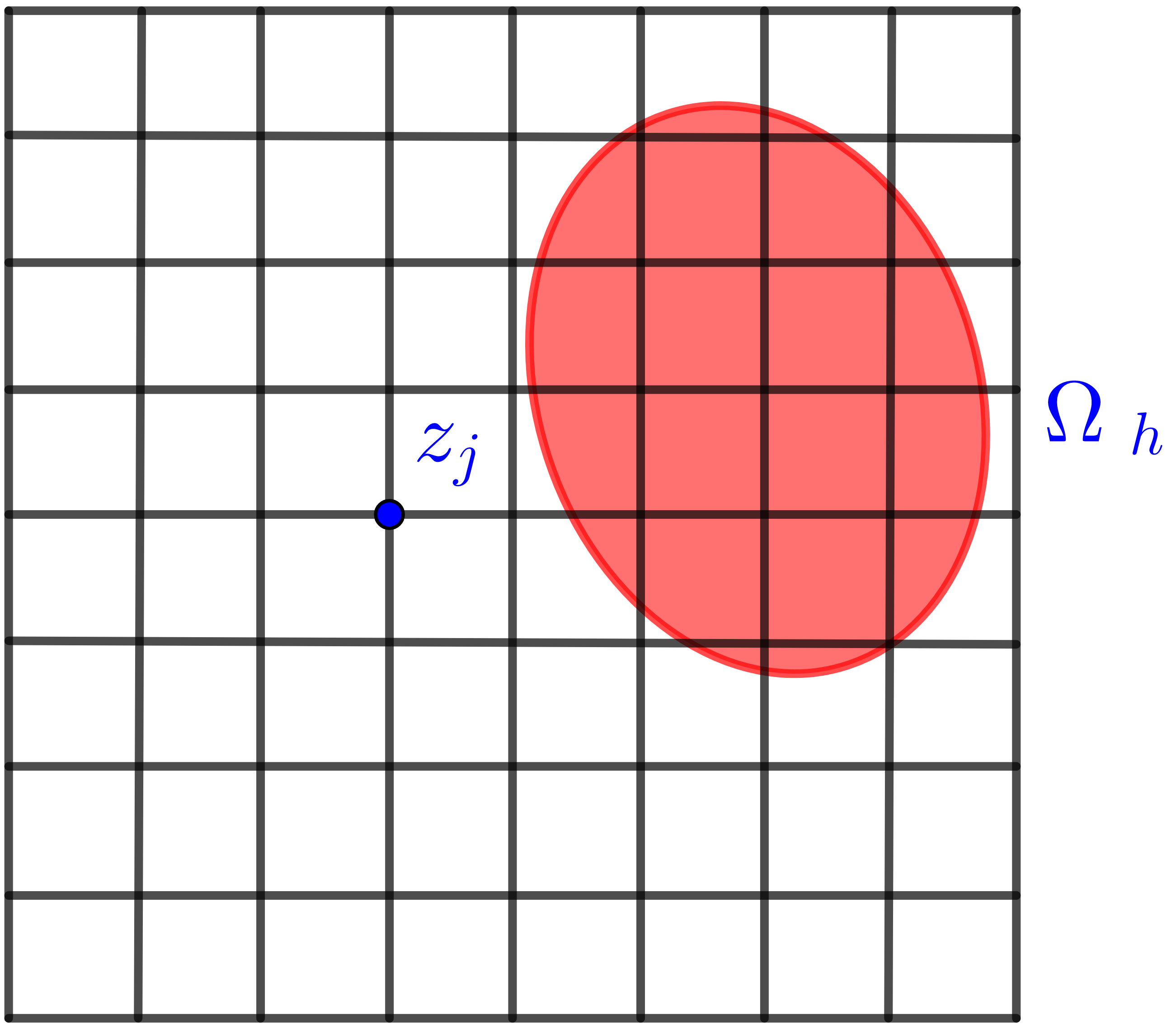}
\caption{Randomly selected samples of elliptic inclusion to represent the coefficient $\sigma$ (left 1-4). A Cartesian mesh $\Omega_h$ with a grid point $\{z_j\}$ (right). 
In computation, discretization of $\mathcal{I}^D$ consists of values taken as $1$ at the mesh points of $\Omega_h$ inside $D$ and $0$ at others. 
}
    \label{fig:inclusion}
\end{figure}

The noise $\xi=\xi(x)$ below \eqref{index_fun} is assumed to be 
\begin{equation}
\label{noise}
\xi(x) = (f(x) - \Lambda_{\sigma_0}g(x)) \tau G(x)
\end{equation}
where $\tau$ specifies the relative strength of noise, and $G(x)$ is a normal Gaussian distribution independent with respect to $x$. As $\xi(x)$ is merely pointwise imposed, the boundary data can be highly rough, even if the ground truth $f(\cdot)-\Lambda_{\sigma_0}g(\cdot)$ is chosen to be smooth. Nevertheless, the harmonic extension makes the noise from boundary data have a minimal impact on the overall reconstruction, thanks to the smoothing property of the inverse of the Laplacian operator $(-\Delta)^{-1}$; for example, please refer to Figure \ref{fig:phi-n0n20-comparison}. In data generation, the harmonic extension feature map is approximated by finite element methods incorporating stencil modification near the inclusion interfaces \citep{2019GuoLingroup,GuoLinLin2017}. Similar data augmentation practices using internal data can be found in \citet{2007NachmanTamasanTimonovConductivity,2013BalHybrid,2022JinLiLuImaging}.

\begin{figure}[h]
\centering
\includegraphics[height=0.23\textwidth]{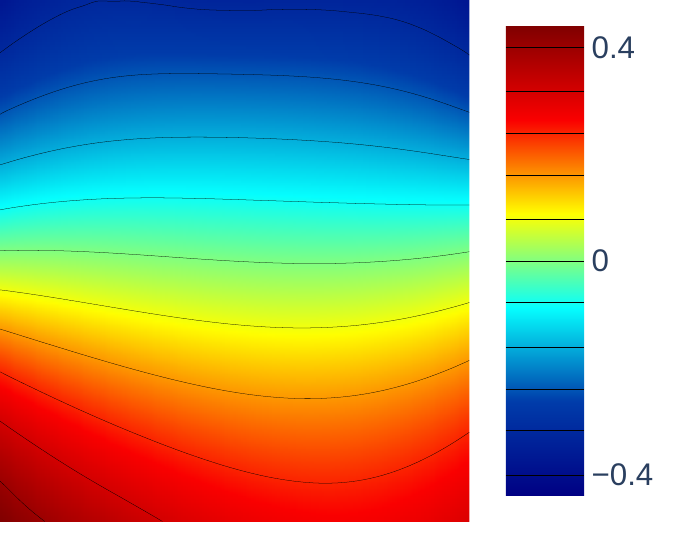}
\quad
\includegraphics[height=0.23\textwidth]{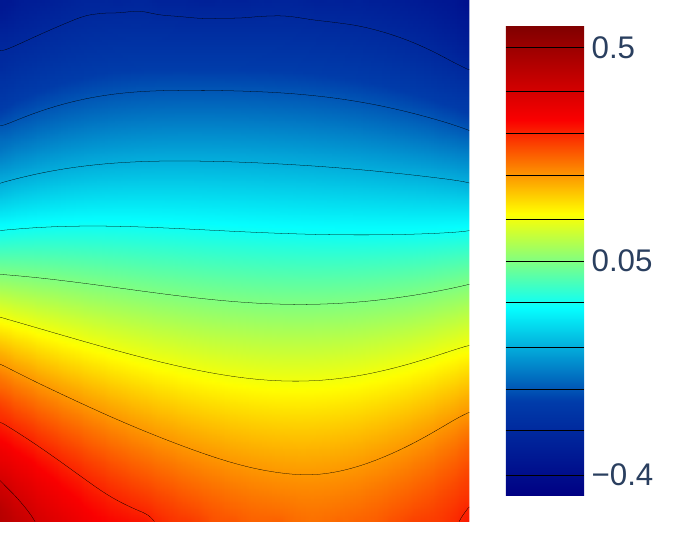}
\quad
\includegraphics[height=0.23\textwidth]{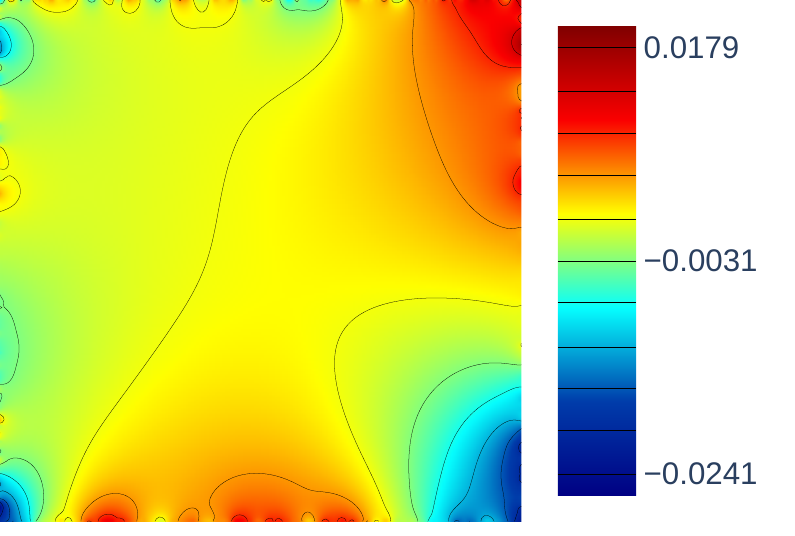}
\caption{The harmonic extensions $\phi_l$ with zero noise on boundary (left); the harmonic extensions $\tilde{\phi}_l$ with $20\% $ Gaussian noise on boundary (middle). Their pointwise difference (right). $\|\phi_l - \tilde{\phi}_l\|/\|\phi_l\| = 0.0203$. }
    \label{fig:phi-n0n20-comparison}
\end{figure}

Thanks to the position-wise binary nature of $\mathcal{I}^{D}$, another choice of the loss function during training can be the binary cross entropy $\mathcal{L}(\cdot,\cdot)$, applied for a function in $\mathbb{P}$, to measure the distance between the ground truth and the network's prediction
\begin{equation}
  \mathcal{L}(\bfp_h, \bfu_h) := - \sum\nolimits_{z\in \mathcal{M}} \big(\bfp_h(z) \ln (\bfu_h(z))  + \left(1-\bfp_h(z)\right) \ln \left(1- \bfu_h(z)\right)\big).
\end{equation}
Thanks to the Pinsker inequality (e.g., see \cite[Section 11.6]{1999CoverElements}), $\mathcal{L}(\bfp_h, \bfu_h)$ serves as a good upper bound for the square of the total variation, which can be further bounded below by the $L^2$-error given the boundedness of the position-wise value.

The training uses \texttt{1cycle} \citep{SmithTopin2019Super} learning rate strategy with a warm-up phase. A mini-batch ADAM iterations are run for 50 epochs with no extra regularization, such as weight decay. The evaluated model is taken from the epoch that has the best validation metric.
The learning rate starts and ends with $10^{-3}\cdot lr_{\max}$, and reaches the maximum of $lr_{\max}$ at the end of the $10$-th epoch. The $lr_{\max}=10^{-3}$. The result demonstrated is obtained from fixing the random number generator seed. Figure \ref{fig:inclusion-new} shows the testing results for a randomly chosen sample.
All models are trained on an RTX 3090 or an A4000. The codes to replicate the experiments are open-source and publicly available. 
\footnote{\url{https://github.com/scaomath/eit-transformer}}.

\begin{figure}[h]
  \centering
    \includegraphics[width=0.45\textwidth]{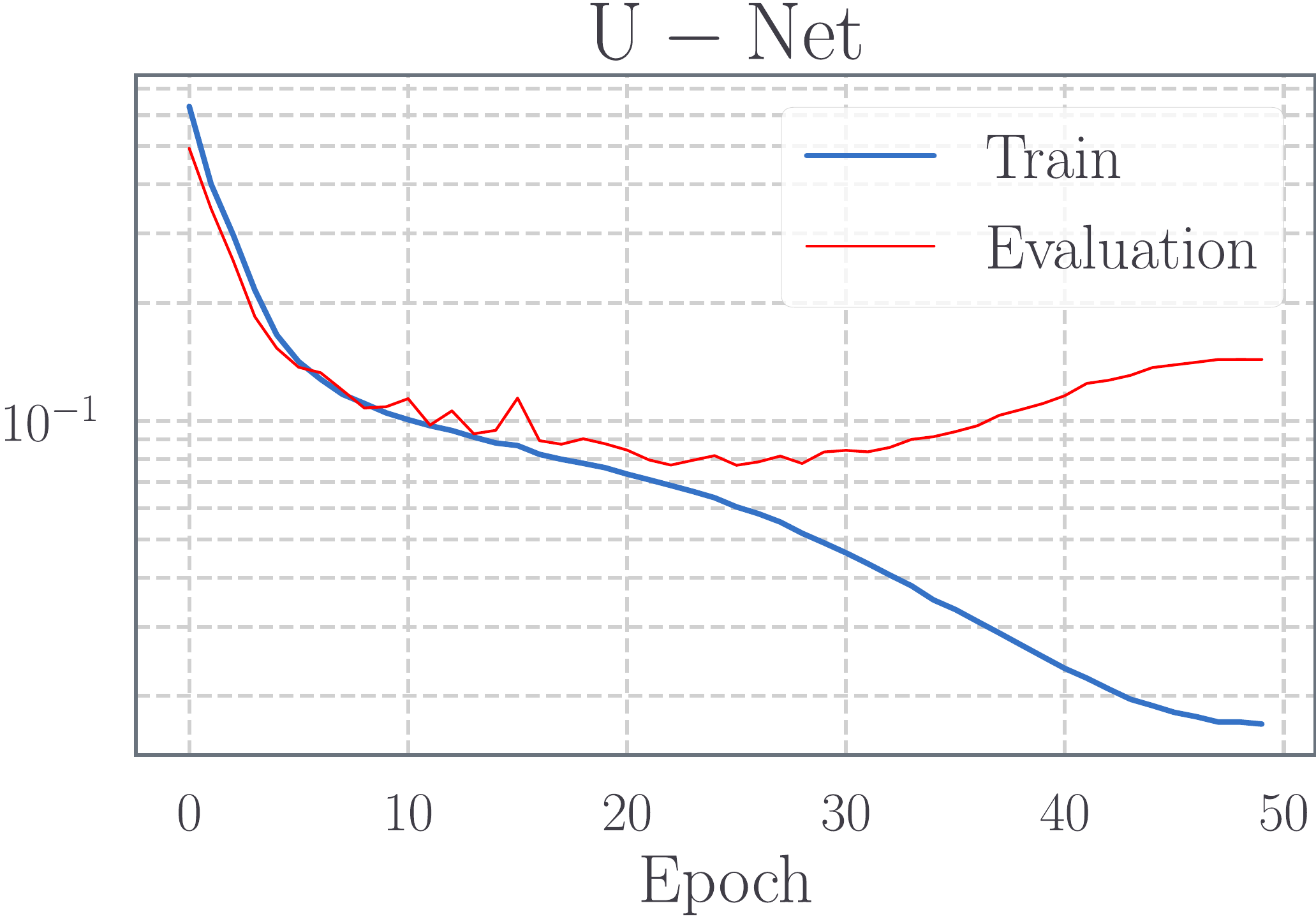}
\hspace*{0.1in}
    \includegraphics[width=0.45\textwidth]{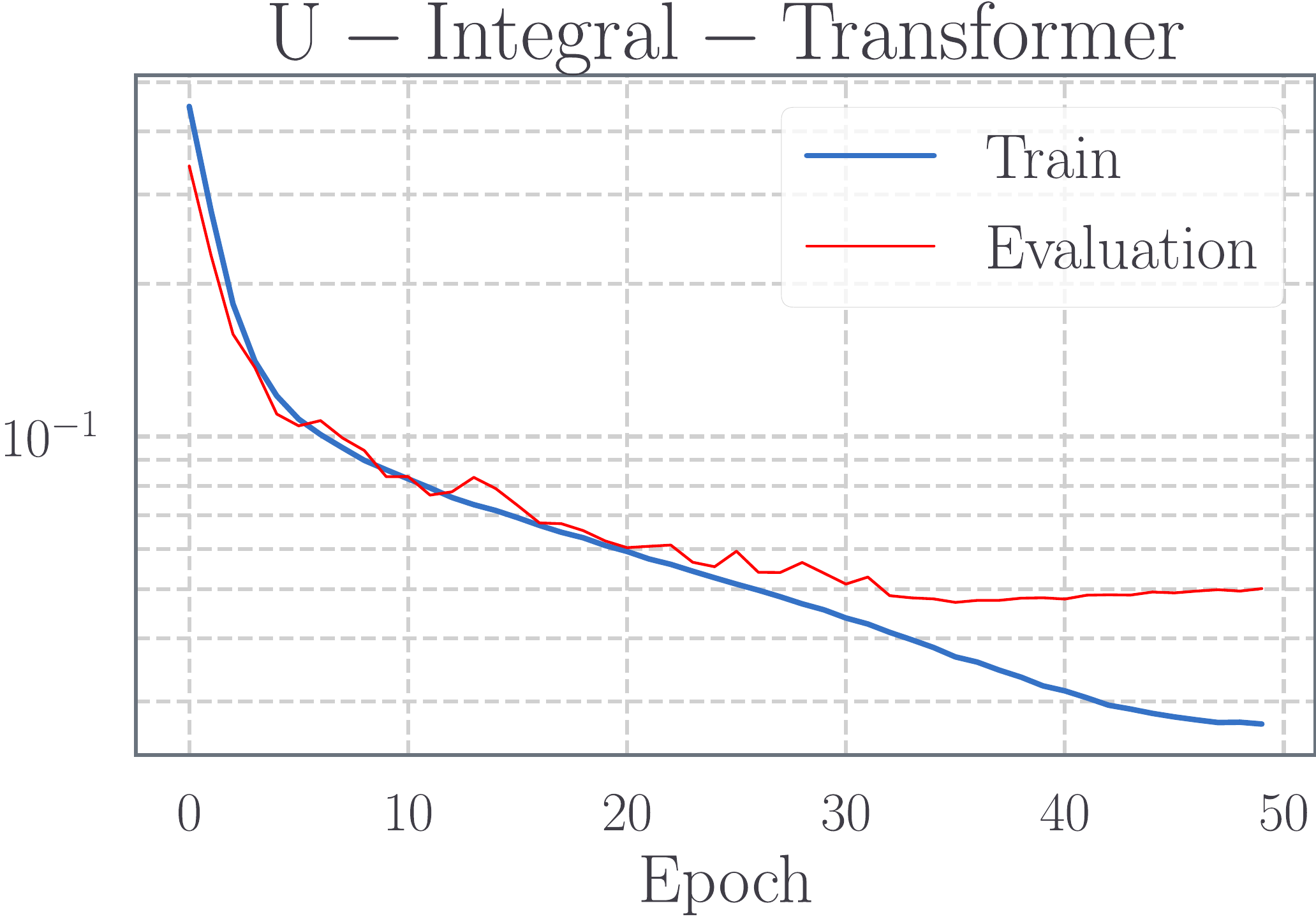}

  \caption{The left: the training-testing pixel-wise binary cross entropy convergence for the CNN-based U-Net with 31 million parameters, a clear overfitting pattern is shown. The right: the training-testing convergence for the attention-based U-Transformer with 11.4 million parameters. }
    \label{fig:convergence}
\end{figure}

\begin{figure}[h]
\centering
\includegraphics[width=0.22\textwidth]{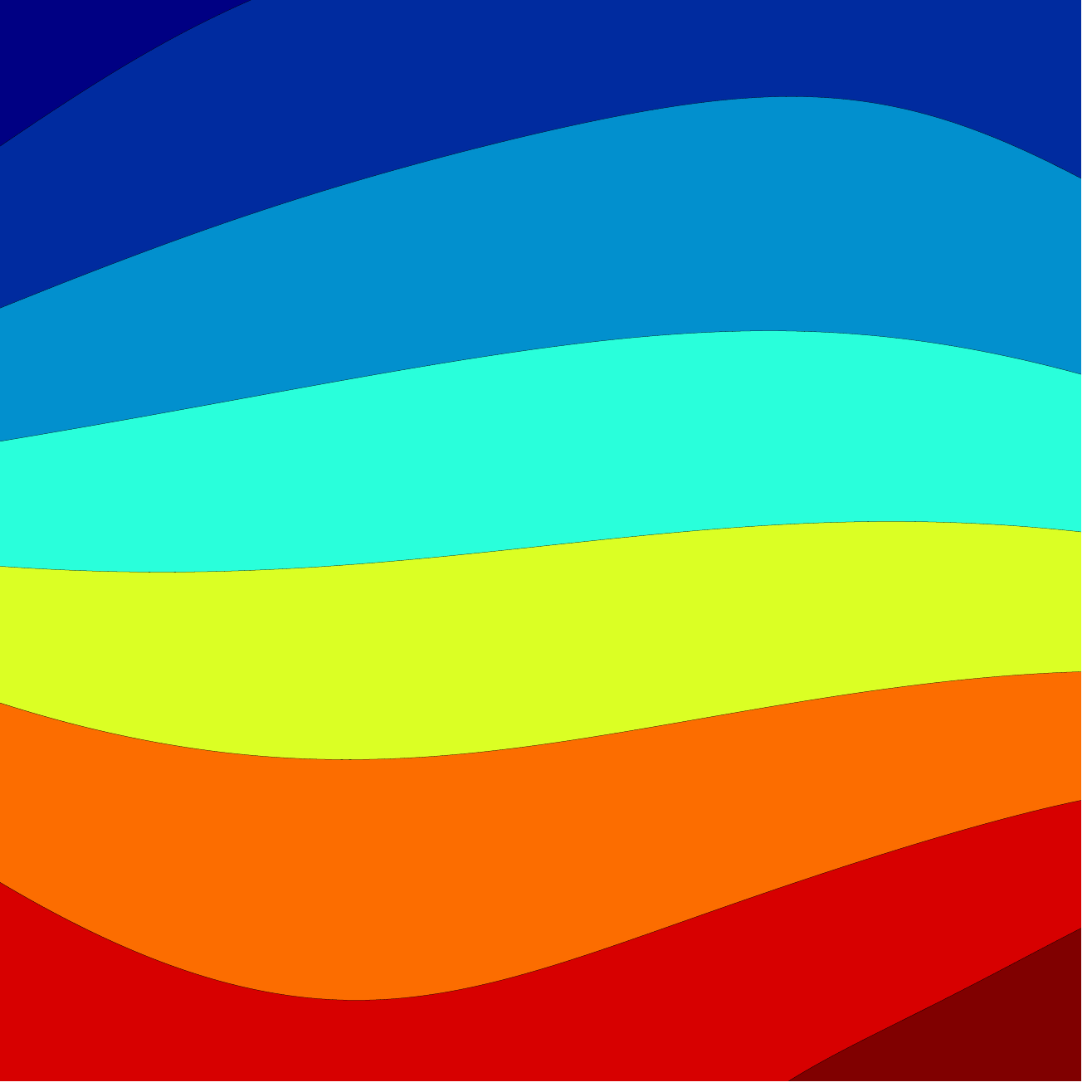}
\hspace{0.05cm}
\includegraphics[width=0.22\textwidth]{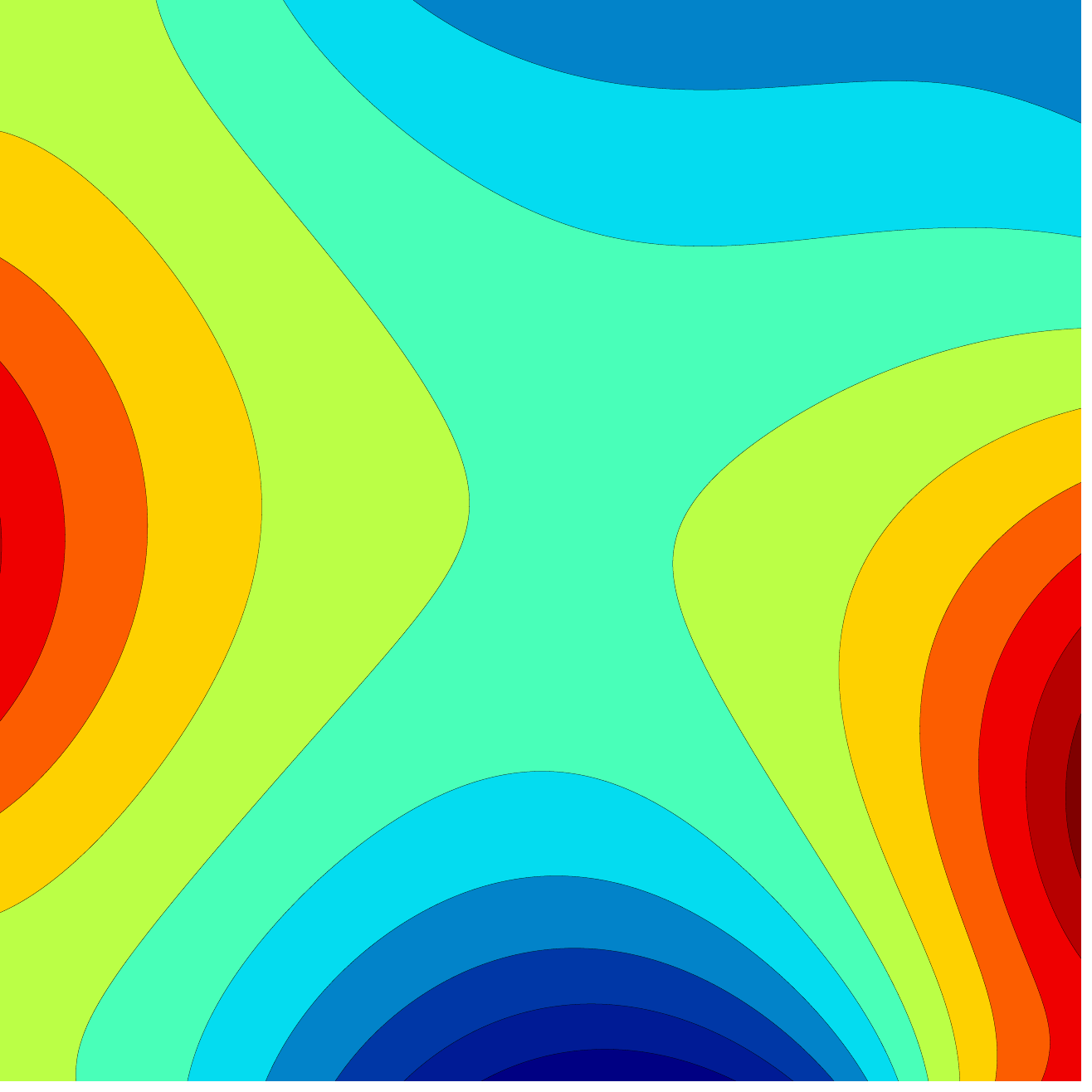}
\hspace{0.05cm}
\includegraphics[width=0.22\textwidth]{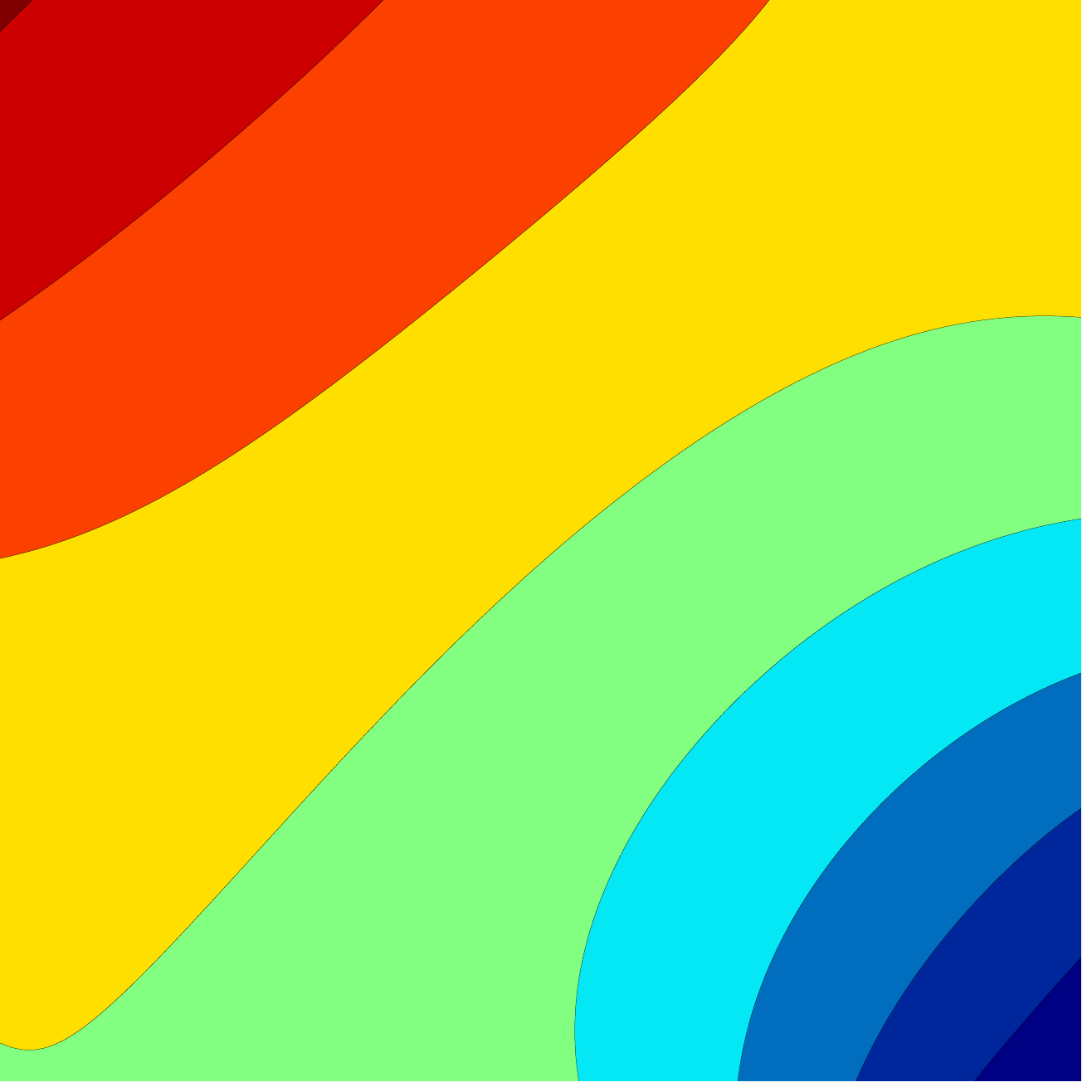}
\hspace{0.05cm}
\includegraphics[width=0.22\textwidth]{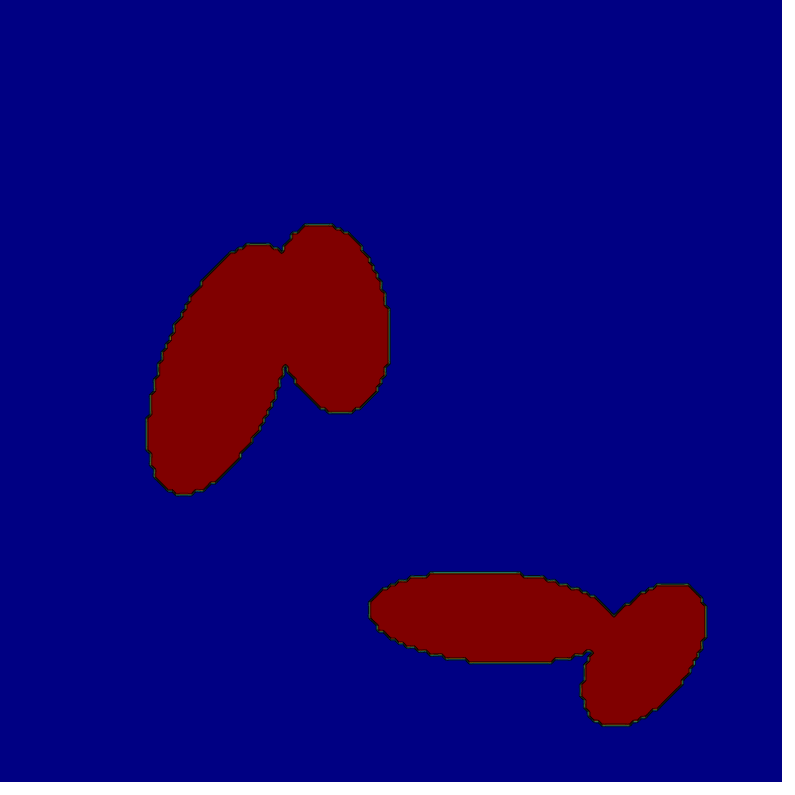}
\caption{The harmonic extension feature map $\phi_l$ (left 1-3 as different channels' inputs to the neural network) corresponding to a randomly chosen sample's inclusion map (right). No visible relevance shown with the ground truth. The layered heatmap appearance is adopted in plotly contour only for aesthetics purposes, no edge-like nor layer-like features can be observed from the actual harmonic extension feature map input.}
    \label{fig:phi-new}
\end{figure}

\begin{figure}[h]
\begin{center}
\begin{subfigure}[b]{0.18\linewidth}
  \centering
  \includegraphics[width=0.95\linewidth]{targets-456-6}
  \caption{\label{fig:target}}
\end{subfigure}%
\;
\begin{subfigure}[b]{0.18\linewidth}
  \centering
  \includegraphics[width=0.95\linewidth]{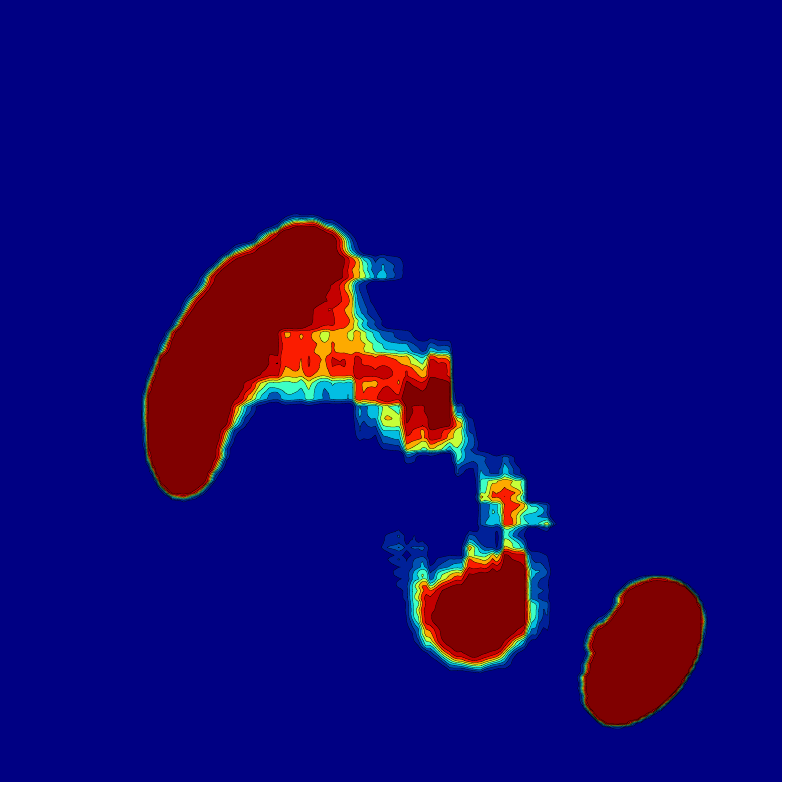}
  \caption{\label{fig:preds-unet-small}}
\end{subfigure}%
\;
\begin{subfigure}[b]{0.18\linewidth}
  \centering
  \includegraphics[height=0.95\linewidth]{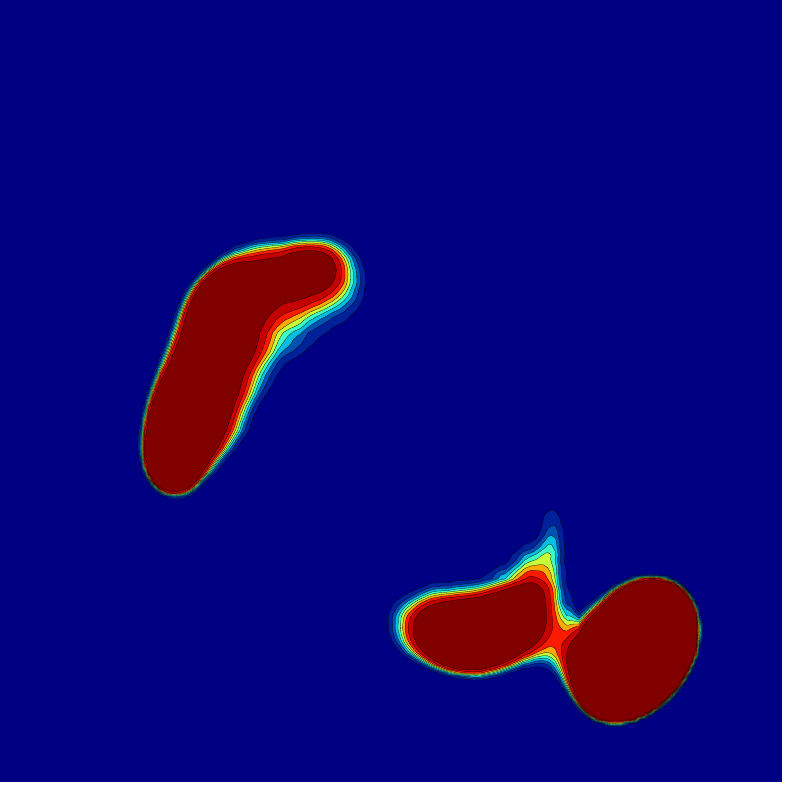}
  \caption{\label{fig:preds-unet-big}}
\end{subfigure}%
\;
\begin{subfigure}[b]{0.18\linewidth}
  \centering
  \includegraphics[height=0.95\linewidth]{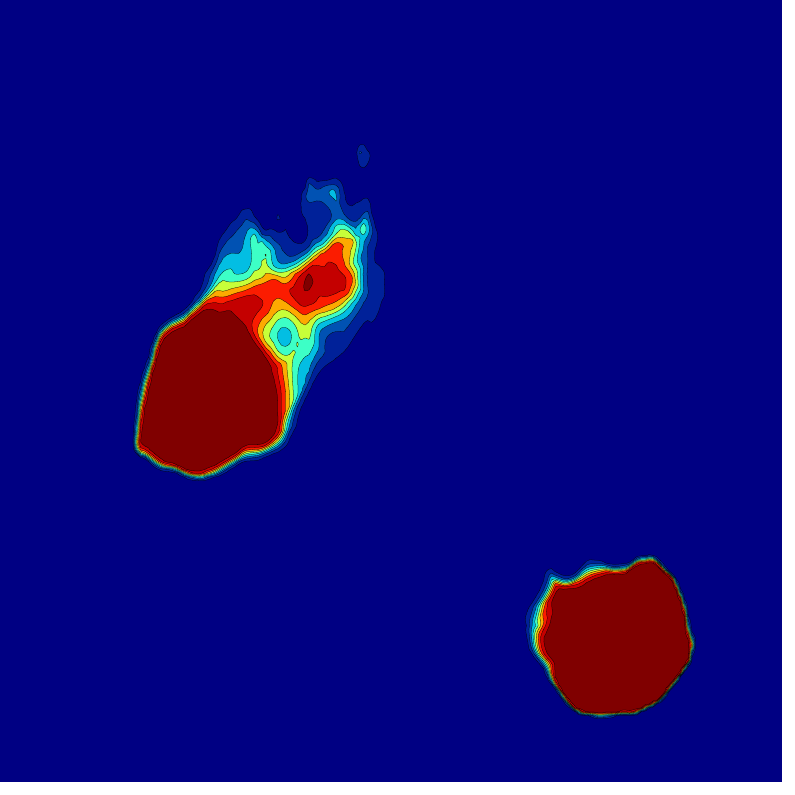}
  \caption{\label{fig:preds-fno}}
\end{subfigure}%
\;
\begin{subfigure}[b]{0.18\linewidth}
  \centering
  \includegraphics[width=0.95\linewidth]{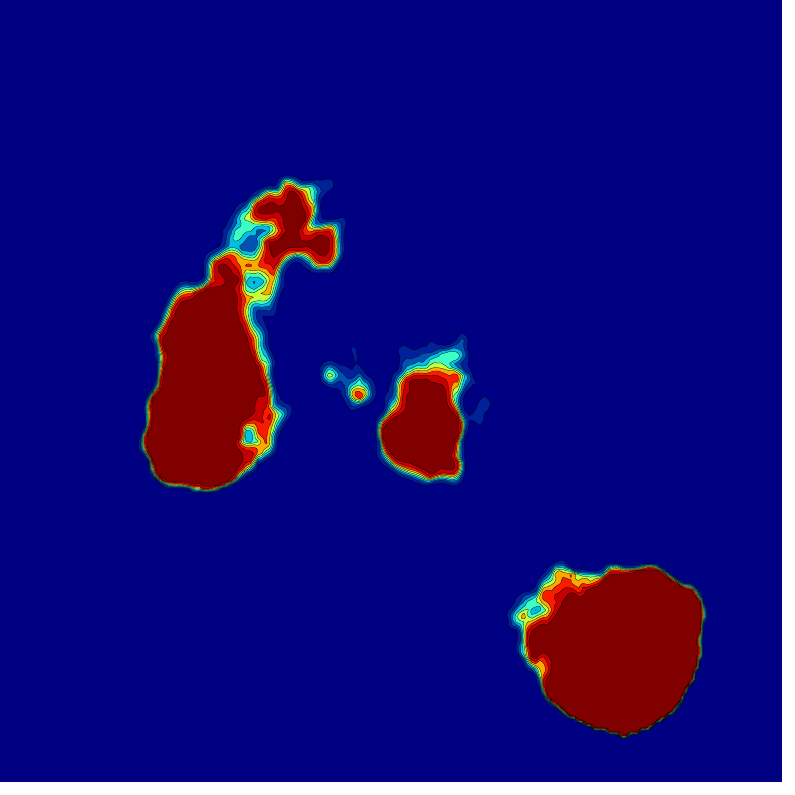}
  \caption{\label{fig:preds-fno-big}}
\end{subfigure}%
\\[5pt]
\begin{subfigure}[b]{0.18\linewidth}
  \centering
  \includegraphics[width=0.95\linewidth]{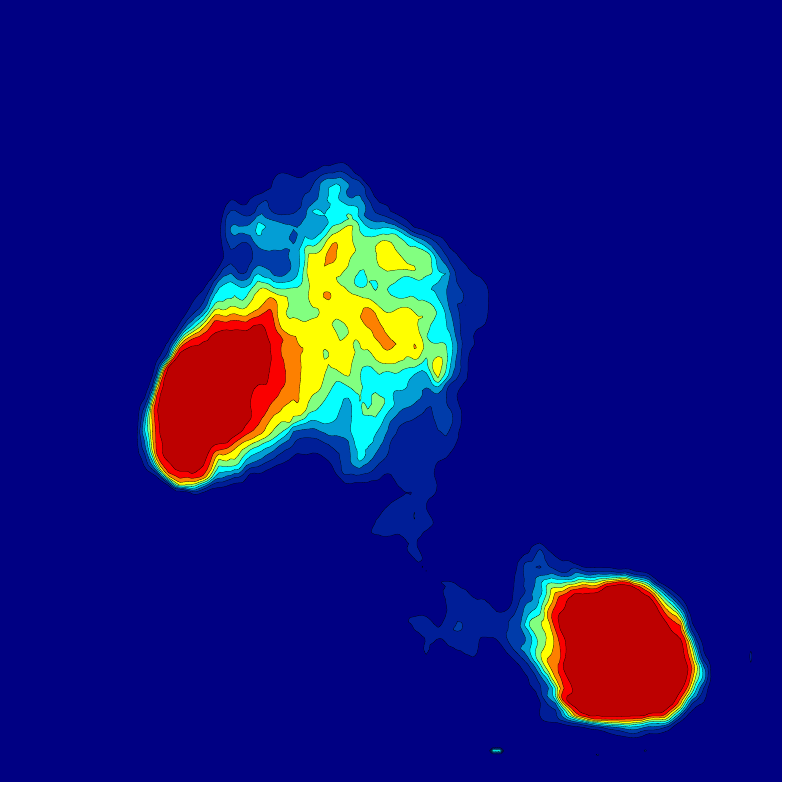}
  \caption{\label{fig:preds-afno}}
\end{subfigure}%
\;
\begin{subfigure}[b]{0.18\linewidth}
  \centering
  \includegraphics[width=0.95\linewidth]{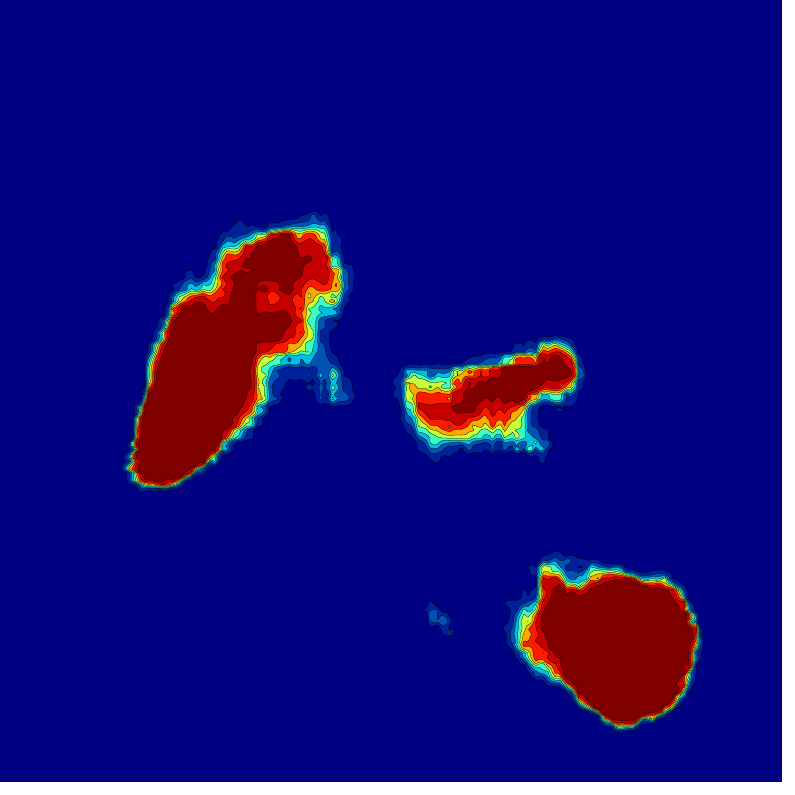}
  \caption{\label{fig:preds-mwo}}
\end{subfigure}%
\;
\begin{subfigure}[b]{0.18\linewidth}
  \centering
  \includegraphics[width=0.95\linewidth]{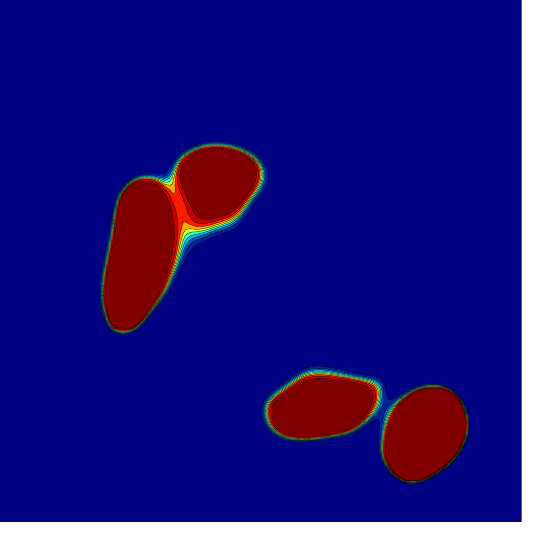}
  \caption{\label{fig:preds-hut}}
\end{subfigure}%
\;
\begin{subfigure}[b]{0.18\linewidth}
  \centering
  \includegraphics[width=0.95\linewidth]{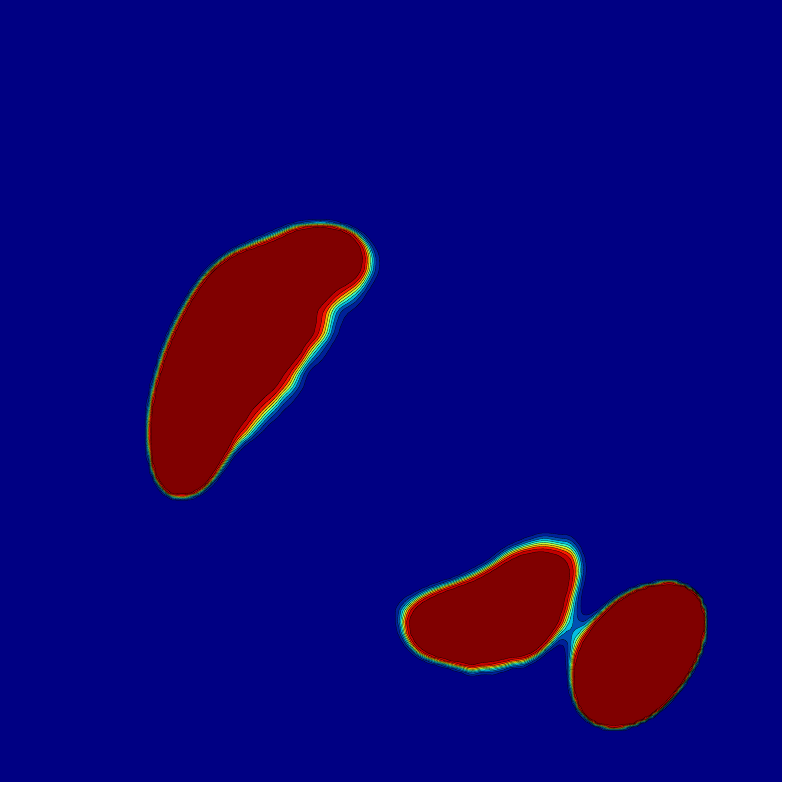}
  \caption{\label{fig:preds-ut}}
\end{subfigure}%
\;
\begin{subfigure}[b]{0.18\linewidth}
  \centering
  \includegraphics[width=0.95\linewidth]{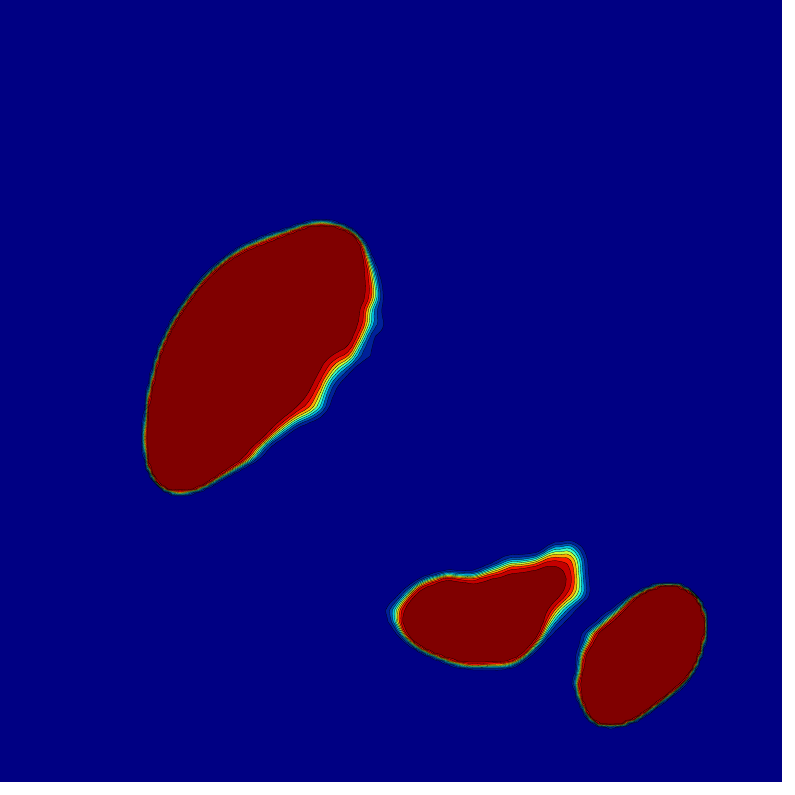}
    \caption{\label{fig:preds-ut-c3}}
\end{subfigure}%
\end{center}
\caption{The neural network evaluation result for the inclusion in Figure \ref{fig:phi-new} using various models. A model's input is the left most function in Figure \ref{fig:phi-new} if this model uses a single channel, or all left three in Figure \ref{fig:phi-new} if a model uses 3-channel input. \subref{fig:target} Ground truth inclusion in Figure \ref{fig:phi-new};
\subref{fig:preds-unet-small} U-Net baseline (7.7m) prediction; \subref{fig:preds-unet-big} U-Net big (31m) prediction with 3 channels; \subref{fig:preds-fno} Fourier Neural Operator (10.4m) prediction with 1 channel;  \subref{fig:preds-fno-big} Fourier Neural Operator big (33m) prediction with 1 channel; \subref{fig:preds-afno} Adaptive Fourier Neural Operator (10.7m) prediction with 1 channel;   \subref{fig:preds-mwo} Multiwavelet Neural Operator (9.8m) prediction with 1 channel; \subref{fig:preds-hut} Hybrid UT with a linear attention (10.13m) prediction with 1 channel; \subref{fig:preds-ut} UIT (11.4m) prediction with 1 channel; \subref{fig:preds-ut-c3} UIT (11.4m) prediction with 3 channels.}
\label{fig:inclusion-new}
\end{figure}

\subsection{Network architecture}
\label{appendix:network}
The difference in architectural hyperparameters, together with training and evaluation costs comparison for all the models compared in the task of EIT reconstruction can be found in Table \ref{table:networks}.

\paragraph{U-Integral Transformer}

\begin{itemize}[topsep=0pt, leftmargin=1em]

\item \emph{Overall architecture.} The U-Integral-Transformer architecture is a drop-in replacement of the standard CNN-based U-Net baseline model (7.7m) in Table \ref{table:rel-L2}. The CNN-based U-Net is used in DL-based approaches for boundary value inverse problem in \cite{2020GuoJiang,2021JiangLiGuo,2022LeNguyenNguyenEtAlSampling}.
One of the novelties is that the input is a tensor that concatenates different measurement matrices as different channels, and a similar practice can be found in \citet{2023BrandstetterBergWellingEtAlClifford}. Same with the baseline U-Net, the UIT has three downsampling layers as the encoder (feature extractor). The downsampling layers map $m\times m$ latent representations to $m/2\times m/2$, and expand the number of channels from $C$ to $2C$. To leverage the ``basis $\Leftrightarrow$ channel'' interpretation and the basis update nature of the attention mechanism in Section \ref{sec:atten}, the proposed attention block is first added on the coarsest grid, which has the most number of channels. UIT has three upsampling layers as the decoder (feature selector), which map $m/2\times m/2$ latent representations to $m\times m$, and shrink the number of channels from $2C$ to $C$. In these upsampling layers, attention blocks are applied on each cross-layer propagation to compute the interaction between the latent representations on both coarse and fine grids (see below). Please refer to Figure \ref{fig:network} for a high-level encoder-decoder schematic. 

\item \emph{Double convolution block.} The double convolution block is modified from that commonly seen in Computer Vision (CV) models, such as ResNet \citep{HeZhangEtAl2016Deep}. We modify this block such that upon being used in an attention block, the batch normalization \citep{2015IoffeSzegedyBatch} can be replaced by the layer normalization \citep{2016BaKirosHintonLayer}, which can be understood as a learnable approximation to the Gram matrices' inverse by a diagonal matrix.

\item \emph{Positional embedding.} At each resolution, the 2D Euclidean coordinates of an $m\times m$ regular Cartesian grid are the input of a channel expansion through a fixed learnable linear layer and are then added to each latent representation. This choice of positional embedding enables a bilinear interpolation between the coarse and fine grids or vice versa (see below).

\item \emph{Mesh-normalized attention.} The scaled dot-product attention in the network is chosen to be the integral kernel attention in \eqref{eq:attention} with a mesh-based normalization. Please refer to Figure \ref{fig:attention} for a diagram in a single attention head.

\begin{figure}[h]
  \centering
    \includegraphics[width=0.99\textwidth]{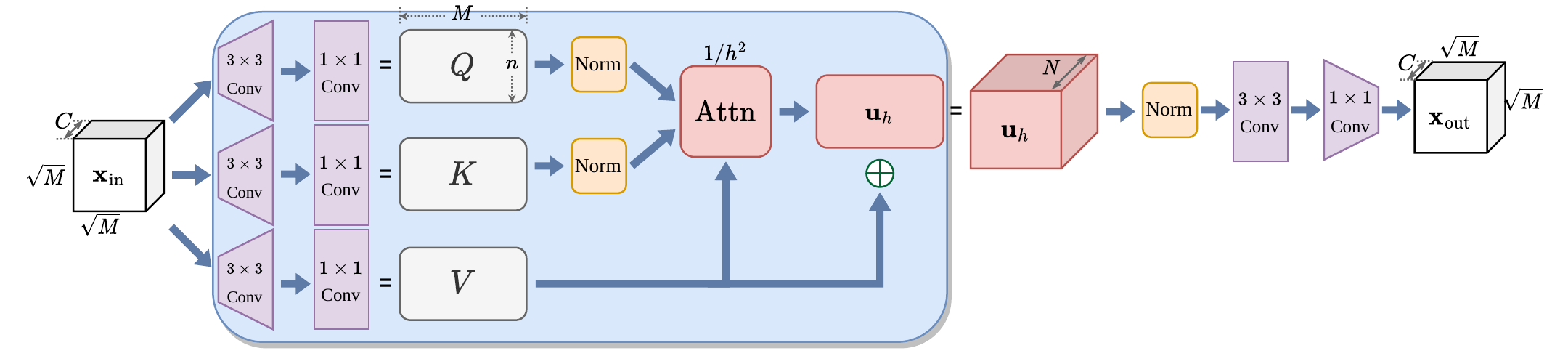}
  \caption{Detailed flow of the modified 2D attention-based encoder layer using \eqref{eq:attention}. $C$: the number of channels in the input, $N$: the number of expanded channels (for the basis expansion interpretation in Theorem \ref{thm:approximability}).}
    \label{fig:attention}
\end{figure}

\item \emph{Interpolation.} Instead of max pooling used in the standard CNN-based U-Net, we opt for a bilinear interpolation on the Cartesian grid to map a latent representation from the fine grid to the coarse grid or vice versa. Note that, in the upsampling layers, the interpolation's outputs are directly inputted into an attention block that computes the interaction of latent representations between coarse and fine grids (see below).

\item \emph{Coarse-fine attention in up blocks.} A modified attention in Section \ref{sec:atten} with a pre-inner-product normalization replaces the convolution layer on the coarsest level. The skip connection from the encoder latent representations to the ones in the decoder are generated using an architecture similar to the cross attention used in \citet{2021PetitThomeRambourEtAlU}. $Q$ and $K$ are generated from the latent representation functions on the same coarser grids. As such, the attention kernel to measure the interaction between different channels is built from the coarse grid. $V$ is associated with a finer grid. Compared with the one in \citet{2021PetitThomeRambourEtAlU}, the modified attention in our method is inspired by the kernel integral for a PDE problem. Thus, it has (1) no softmax normalization or (2) no Hadamard product-type skip connection.
\end{itemize}

\paragraph{Other operator learners compared}
\begin{itemize}[topsep=0pt, leftmargin=1em]
  \item \emph{Fourier Neural Operator (FNO) and variants.} Fourier Neural Operator (FNO2d) learns convolutional filters in the frequency domain for some pre-selected modes, efficiently capturing globally-supported spatial interactions for these modes. The weight filter in the frequency domain multiplies with the lowest modes in the latent representations (four corners in the FFT).
The Adaptive Fourier Neural Operator (AFNO2d) adds a token-mixing layer, as seen in Figure 2 in \citet{2022GuibasMardaniLiEtAlAdaptive}, appending every spectral convolution layer in the baseline FNO2d model.

\item \emph{Multi-Wavelet Neural Operator (MWO). } The MultiWavelet Neural Operator (MWO) is proposed in \citet{GuptaXiaoBogdan2021Multiwavelet}, which introduces a multilevel structure into the FNO architecture. MWO still follows FNO's practice on each level by pre-selecting the lowest modes.

\end{itemize}

\begin{table}
\caption{The detailed comparison of the networks used in this study. For U-Net-based neural networks, the channel/width is the number of the base channels on the finest grid after the initial channel expansion. A \texttt{torch.cfloat} type parameter entry counts as two parameters. GFLOPs: Giga FLOPs for 1 backpropagation (BP) performed for a batch of 8 samples recorded the PyTorch \texttt{autograd} profiler for 1 BP averaging from 100 BPs. Eval: number of instances per second.}
\vspace*{-0.1in}
\label{table:networks}
\begin{center}
  \resizebox{0.9\textwidth}{!}{%
  \begin{tabular}{rccccccccc}\toprule
& \multicolumn{4}{c}{Architectures} 
& \multicolumn{2}{c}{Training/Evaluation cost}
&  \multirow{3}{*}{\# params} 
\\
\cmidrule(lr){2-5}
\cmidrule(lr){6-7}
&  layers & channel/width & modes & norm
& GFLOPs & eval &
\\
\midrule
U-Net              & 7 & 64 & N/A & batch & 140.6 & 298.6  & 7.70m
\\
U-Net big          & 9 & 64 & N/A & batch & 184.1 & 273.4 & 31.04m
\\
FNO2d              & 6 & 48 & 14  & FFT & 196.5 & 235.4 & 10.86m
\\
AFNO2d             & 6 & 48 & 14  & FFT & 198.4  & 151.0 & 10.88m
\\
MWO2d              & 4 & 64 & 12  & Legendre & 1059 & 59.6  & 9.81m
\\
Hybrid UT          & 7 & 32 & N/A  & batch & 427.5 & 114.0   & 10.13m
\\
Cross-Attn UT      & 7 & 64 & N/A  & layer & 658.9 & 103.3  & 11.42m
\\
UIT (ours)         & 7 & 64 & N/A  & layer & 658.3 & 104.8   & 11.43m
\\
\bottomrule
\end{tabular}
}
\end{center}
\end{table}

\tikzset{%
    mybox/.style={%
        rectangle,rounded corners=0.1mm, draw=black, fill opacity=0.7, inner sep=3.5pt
    }
}

\DeclareRobustCommand\convbox{\tikz{\node[mybox, fill={rgb:yellow,5;red,2.5;white,5}] {};}}

\DeclareRobustCommand\lnbox{\tikz{\node[mybox,fill={rgb:yellow,5;red,5;white,5}] {};}}

\DeclareRobustCommand\interpbox{\tikz{\node[mybox, fill={rgb:red,1;black,0.3}] {};}}

\DeclareRobustCommand\crossattnbox{\tikz{\node[mybox, fill={rgb:blue,2;green,1;black,0.3}] {};}}

\DeclareRobustCommand\fnbox{\tikz{\node[mybox, fill={rgb:magenta,5;black,7}] {};}}

\begin{figure}[h]
  \centering
    \includegraphics[width=0.8\textwidth]{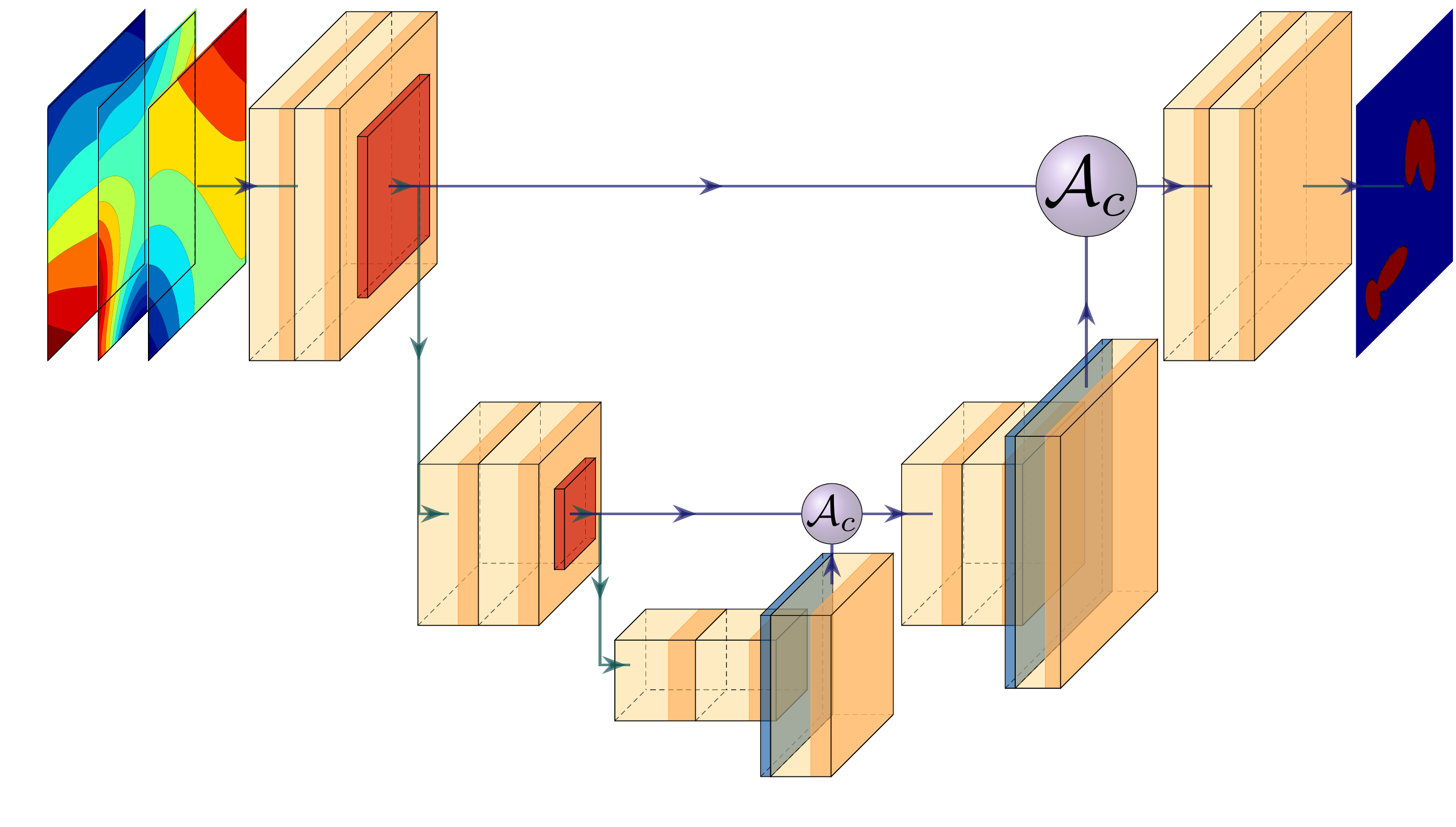}
  \caption{A simplified schematic of the U-Integral-Transformer that follows the standard U-Net. The input is a tensor concatenating the discretizations of $\phi$ and $\nabla \phi$. The output is the approximation to the index function $\mathcal{I}^D$. 
  \convbox: $3\times 3$ convolution + ReLU; \lnbox: layer normalization or batch normalization; \interpbox: bilinear interpolations from the fine grid to the coarse grid; \crossattnbox: cross attention $\mathcal{A}_c$ that uses the latent representations on a coarse grid to compute interactions to produce the latent representations on a finer grid; \fnbox: input and output discretized functions in certain Hilbert spaces. The TikZ source code to produce this figure is modified from the examples in \citet{2018PlotNeuralNet}.}
    \label{fig:network}
\end{figure}

\section{From DSM to Transformer}
\label{append:trans_derive}
This section gives a more detailed presentation of how the attention-like operator is derived from the DSM ansatz \eqref{index_fun2} with learnable kernels. 

We begin with recalling the original indicator function from \eqref{index_fun}:
\begin{equation}
\label{append_indicator}
\mathcal{I}^D(x) :=  C_{f,g} 
 \frac{\bfd(x) \cdot \nabla\phi(x) }{ | \eta_{x} |_{H^{s}(\partial \Omega)}}, 
\end{equation}
where $C_{f,g}=\| f - \Lambda_{\sigma_0}g \|^{-1}_{L^2(\partial \Omega)}$ is a constant, and the equations of the functions $\phi$ and $\eta_x$ are :
\begin{align}
& -\Delta \phi  = 0 \quad \text{in} \quad \Omega, \quad  \bfn\cdot\nabla\phi = (f - \Lambda_{\sigma_0}g) + \xi \quad \text{on} \quad \partial \Omega, \quad \textstyle \int_{\partial \Omega} \phi \dd s = 0 \label{phi_fun},
\end{align}
\begin{align}
\label{eta_fun}
 -\Delta \eta_{x}  = -\bfd(x) \cdot  \nabla \delta_x \quad \text{in} \quad \Omega,  \quad \bfn\cdot\nabla\eta_{x} = 0 \quad \text{on} \quad \partial \Omega, \quad 
\textstyle\int_{\partial \Omega} \eta_{x} \dd s = 0.
\end{align}
\citet{chow2014direct} shows that $\mathcal{I}^D(x)$ can be written as a sum of Gaussian-like distributions that attain maximum values for $x\in D$ or close to $D$. However, the accuracy is much limited by empirical choices of quantities in \eqref{append_indicator}, for example, $\bfd(x)=\nabla\phi(x)/\| \nabla\phi(x) \|$ and $s = 3/2$. In addition, the frequency of the boundary data plays an important role in reconstruction; see the derivation in \citet[Section 5]{chow2014direct} for circular $\Omega$. What is more, such a simple formula can be derived only for a single data pair.

Henceforth, these ``empirical chosen quantities'' are made to be learnable from data by introducing two undetermined kernels $\mathcal{K}(x,y), \mathcal{Q}(x,y)$, and a self-adjoint positive definite linear operator $\mathcal{V}$, the modified indicator function is written as 
\begin{equation}
\label{append_index_fun2}
\hat{\mathcal{I}}^D_1(x) :=  C_{f,g} 
\frac{ \int_{\Omega} \bfd(x) \cdot \mathcal{K}(x,y) \nabla\phi(y) \dd y}{ | \eta_{x} |_Y }.
\end{equation}
with
\begin{equation}
\label{append_dx}
\bfd(x) := \int_{\Omega} \mathcal{Q}(x,y) \nabla \phi(y) \dd y.
\end{equation}
\begin{equation}
\label{append_eta_fun4}
| \eta_{x} |^2_Y := (\mathcal{V}\eta_x, \eta_x)_{L^2(\partial \Omega)}. 
\end{equation}
 Applying certain quadrature rule to \eqref{append_index_fun2} with the quadrature points $z_i$, i.e., the grid points of $\Omega_h$, and weights $\{\omega_j\}$, we obtain an approximation to the integral:
\begin{equation}
\label{append_K1}
 \int_{\Omega}  \mathcal{K}(z_i,y) \partial_{x_n}\phi(y) \dd y \approx \sum\nolimits_j \omega_j   \mathcal{K}(z_i,z_j)\partial_{x_n} \phi(z_j) =: \bfk_i ^{T} {\bfphi}_n,
 \end{equation}
 i.e., $\bfq_i^{T}$ is the vector of $\left[ \omega_j \mathcal{K}(z_i,z_j) \right]_{j}$.
For \eqref{append_dx}, we consider one component $d_n(x)$ of $\bfd(x)$. With the same rule to compute the integral, \eqref{dx} can be written as
\begin{equation}
\label{append_dx2}
d_n(z_i) \approx \sum\nolimits_j \omega_j \mathcal{Q}(z_i,z_j) \partial_{x_n} \phi(z_j) =: \bfq_i^{T}  {\bfphi}_n,
\end{equation}
i.e., $\bfq_i^{T}$ is the vector of $\left[ \omega_j \mathcal{Q}(z_i,z_j) \right]_{j}$.
Next, we proceed to express $| \eta_{z_i} |_Y$ by discretizing the variational form in \eqref{eta_fun} using a linear finite element method (FEM). 
Applying integration by parts, the weak form of \eqref{eta_fun} at $x=z_i$ is
\begin{equation}
\label{eta_fun2}
\int_{\Omega} \nabla \eta_{z_i}\cdot\nabla \psi \dd y = \int_{\Omega} -\bfd(z_i)\cdot \nabla \delta_{z_i}(y) \psi(y) \dd y = -\bfd(z_i)\cdot \nabla\psi(z_i),
\end{equation}
for any test function $\psi\in H^1_0(\Omega)$. 
Here, we let $\{\psi^{(j)}\}_{j=1}^M$ be the collection of the finite element basis functions, and for the fixed $z_i$ we further let $\bfpsi_{n,i}$ be the vector approximating $[\partial_{x_n}\psi^{(j)}(z_i)]^M_{j=1}$. Denote $\bfeta_i$ as the vector approximating $\{\eta_{z_i}(z_j)\}_{j=1}^M$. 
Introduce the matrices
\begin{itemize}[topsep=-3pt, itemsep=0pt, leftmargin=1em]
\item  $B$: the finite element/finite difference discretization of $-\Delta$ on $\Omega_h$ (a discrete Laplacian) coupled with the Neumann boundary condition and the zero integral normalization condition in \eqref{PhiN}.
\item  $R$: the matrix that projects a vector defined at interior grids to the one defined on $\partial \Omega$. 
\end{itemize}
Then, the finite element discretization of \eqref{eta_fun2} yields the linear system:
\begin{equation}
\label{append_eta_fun3}
B \bfeta_i = \sum\nolimits_n d_n (z_i) \bfpsi_{n,i} \approx \sum\nolimits_n \bfpsi_{n,i} \bfq_i^{T}  {\bfphi}_n,
\end{equation}
where we have used \eqref{append_dx2}. Then, the trace of $\eta_{z_i}$ on $\partial\Omega$ admits the following approximation
\begin{equation}
\label{append_eta_fun3_trace}
\bar{\bfeta}_i  : = \eta_{z_i}|_{\partial\Omega} \approx \sum\nolimits_n R B^{-1} \bfpsi_{n,i} \bfq_i^{T}  {\bfphi}_n.
\end{equation}
Now, we can discretize \eqref{append_eta_fun4}. Note that the trace of a linear finite element space on $\partial\Omega$ is still a continuous piecewise linear space, defined as $S_h(\partial\Omega)$. Then, the self-adjoint positive definite operator $\mathcal{V}$ can be parameterized by a symmetric positive definite (SPD) matrix denoted by $V$ operating on the space $S_h(\partial\Omega)$. 
We can approximate $| \eta_{z_i} |^2_Y$ as 
\begin{equation}
\label{append_eta_fun5}
| \eta_{z_i} |^2_Y \approx \bar{\bfeta}^{T}_i V  \bar{\bfeta}_i \approx \sum\nolimits_n  \bfpsi^T_{n,i} B^{-1} R^T V R B^{-1} \bfpsi_{n,i} \bfphi^{T}_n \bfq_i \bfq^T_i  {\bfphi}_n
\end{equation}
where $ \bfpsi^T_{n,i} B^{-1} R^T V R B^{-1} \bfpsi_{n,i} \ge 0$ as $V$ is SPD. 
Define 
\begin{equation}
\label{append_eta_fun6}
\bfv_{n,i} = (\bfpsi^T_{n,i} B^{-1} R^T V R B^{-1} \bfpsi_{n,i} )^{1/2}\bfq_i. 
\end{equation}
This can be considered another learnable vector since the coefficient of $\bfq_i$ comes from the learnable matrix $V$. Then, \eqref{append_eta_fun5} reduces to
\begin{equation}
\label{append_eta_fun7}
| \eta_{z_i} |^2_Y \approx \sum_n \bfphi^{T}_n \bfv_{n,i} \bfv^T_{n,i} {\bfphi}_n
\end{equation}
Putting \eqref{append_K1}, \eqref{append_dx2} and \eqref{append_eta_fun7} into \eqref{index_fun2}, we have
\begin{equation}
\label{append_index_fun3}
\hat{\mathcal{I}}^D_1(z_i) \approx 
\Big\{\| f - \Lambda_{\sigma_0} g \|_{L^2(\partial\Omega)}^{-1}
\big( \sum_n \bfphi^{T}_n \bfv_{n,i} \bfv^T_{n,i} {\bfphi}_n \big)^{-1/2} \Big\}
\sum_n \bfphi^{T}_n \bfq_i \bfk^{T}_i \bfphi_n.
\end{equation}
Now, using the notation in \eqref{WQKV}, we get the desired representation \eqref{index_fun4}.

\section{Proof of Theorem \ref{thm_uniq}}
\label{appendix:proof-thm-uniq}

\begin{lemma}%
\label{lem_Theta}
Suppose the boundary data $g_l$ is the eigenfunction of $\Lambda_{\sigma}-\Lambda_{\sigma_0}$ corresponding to the $l$-th eigenvalue $\lambda_l$, and let $\phi_l$ be the data functions generated by harmonic extensions
\begin{equation}
 -\Delta \phi_l  = 0 \quad \text{in} \quad \Omega, \quad  \bfn\cdot\nabla\phi_l = (f_l - \Lambda_{\sigma_0}g_l) = (\Lambda_{\sigma} - \Lambda_{\sigma_0} )g_l  \quad \text{on} \quad \partial \Omega, \quad \int_{\partial \Omega} \phi_l \dd s = 0 ,
\end{equation}
where $l=1,2,\cdots$. 
Let $\bfd$ be an arbitrary unit vector in $\mathbb{R}^2$, define a function         
\begin{equation}
\label{thm_uniq_eq1}
\Theta_L(x) = \sum_{l=1}^{L} \frac{( \bfd\cdot \nabla \phi_{l}(x)  )^2}{\lambda_{l}^3}. 
\end{equation}
Then, there holds
\begin{equation}
\label{thm_uniq_eq2}
\lim_{L \rightarrow \infty} \Theta_L(x) =
\begin{cases}
       \; \infty, &  \text{if}~ x\notin D, 
       \\[5pt]
       \; \text{a finite constant}, &  \text{if}~ x\in D.
\end{cases}
\end{equation}
\end{lemma}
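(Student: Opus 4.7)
My plan is to exploit the eigenvalue structure of $\Lambda_\sigma-\Lambda_{\sigma_0}$ to reduce $\Theta_L$ to a weighted Fourier series of a boundary ``dipole trace'', and then to invoke Picard's criterion together with the factorization method of EIT to obtain the claimed dichotomy. First, since $(\Lambda_\sigma-\Lambda_{\sigma_0})g_l=\lambda_l g_l$, the Neumann datum of $\phi_l$ equals $\lambda_l g_l$; by linearity of the harmonic extension, $\phi_l=\lambda_l\psi_l$, where $\psi_l$ solves $-\Delta\psi_l=0$ in $\Omega$ with $\bfn\cdot\nabla\psi_l=g_l$ on $\partial\Omega$ and $\int_{\partial\Omega}\psi_l\dd s=0$. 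Substituting this relation collapses the $\lambda_l^3$ weight to a single power:
\[
\Theta_L(x)=\sum_{l=1}^L\frac{(\bfd\cdot\nabla\psi_l(x))^2}{\lambda_l}.
\]

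Next I would introduce the Neumann Green's function $N(\cdot,\cdot)$ for $-\Delta$ on $\Omega$ under the mean-zero normalization, representing $\psi_l(x)=\int_{\partial\Omega}N(x,y)g_l(y)\dd s(y)$ and hence
\[
\bfd\cdot\nabla\psi_l(x)=\langle\eta_x,g_l\rangle_{L^2(\partial\Omega)},\qquad \eta_x(y):=\bfd\cdot\nabla_xN(x,y)\big|_{y\in\partial\Omega}.
\]
Since $\Lambda_\sigma-\Lambda_{\sigma_0}$ is compact, self-adjoint, and positive on the mean-zero subspace of $L^2(\partial\Omega)$, I may take $\{g_l\}$ to be an orthonormal eigenbasis, so that Parseval yields
\[
\Theta_L(x)=\sum_{l=1}^L\frac{|\langle\eta_x,g_l\rangle|^2}{\lambda_l}.
\]
By Picard's criterion, $\lim_{L\to\infty}\Theta_L(x)$ is finite if and only if $\eta_x\in\mathrm{Range}\bigl((\Lambda_\sigma-\Lambda_{\sigma_0})^{1/2}\bigr)$; otherwise the series diverges.

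The main obstacle is then to establish the geometric equivalence
\[
\eta_x\in\mathrm{Range}\bigl((\Lambda_\sigma-\Lambda_{\sigma_0})^{1/2}\bigr)\iff x\in D.
\]
This is exactly the content of the factorization method of Brühl and Hanke for EIT: one exhibits a factorization $\Lambda_\sigma-\Lambda_{\sigma_0}=A^{*}TA$, where $A$ is a single-layer-type operator whose adjoint maps interior sources supported on $\bar D$ to their Dirichlet traces on $\partial\Omega$, and $T$ is coercive on $H^{-1/2}(\partial D)$. The range of $(\Lambda_\sigma-\Lambda_{\sigma_0})^{1/2}$ then coincides with the range of $A^{*}$, and the dipole trace $\eta_x$ lies in this range precisely when the singularity of $\nabla_x N(x,\cdot)$ can be realized by a harmonic source inside $D$, i.e., when $x\in D$. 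Everything else in the argument is routine bookkeeping (Green's representation and spectral algebra); the crux is this factorization-based geometric characterization of which boundary functionals are generated by sources strictly inside the inclusion.
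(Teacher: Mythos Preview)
Your proposal is correct and is essentially the argument the paper defers to: the paper does not give a self-contained proof of this lemma but simply cites Theorem~4.1 of Guo--Jiang (2020) together with the factorization papers of Br\"uhl (2001) and Hanke--Br\"uhl (2003). Your outline---reduce the $\lambda_l^3$ weight to $\lambda_l$ via $\phi_l=\lambda_l\psi_l$, represent $\bfd\cdot\nabla\psi_l(x)=\langle\eta_x,g_l\rangle$ through the Neumann Green's function, apply Picard's criterion, and then invoke the factorization-method range characterization $\eta_x\in\mathrm{Range}\bigl((\Lambda_\sigma-\Lambda_{\sigma_0})^{1/2}\bigr)\iff x\in D$---is exactly the structure of the proof in those references, so there is nothing to add.
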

\begin{proof}
See Theorem 4.1 in \citet{2020GuoJiang}, and also see \citet{2001Brhl,2003HankeBrhl}.
\end{proof}

\begin{manualtheorem}{1}[A finite-dimensional approximation of the index function]
Suppose the boundary data $g_l$ is the eigenfunction of $\Lambda_{\sigma}-\Lambda_{\sigma_0}$ corresponding to the $l$-th eigenvalue $\lambda_l$, and let $\phi_l$ be the data functions generated by harmonic extensions given in \eqref{PhiN}.
Define the space:
\begin{equation}
\label{Sspace}
\widetilde{\mathbb{S}}_L = \emph{Span}\{ \partial_{x_1}\phi_l ~ \partial_{x_2}\phi_l: ~ l=1,...,L \},
\end{equation}
and the dictionary:
\begin{equation}
\label{Sdict}
\mathbb{S}_L = \{ a_1 + a_2 \arctan(a_3 v)\, : \, v \in \widetilde{\mathbb{S}}_L , ~ a_1,a_2,a_3 \in \mathbb{R}\}.
\end{equation}
Then, for any $\epsilon>0$, we con construct an index function $\mathcal{I}^D_L\in\mathbb{S}_L$ s.t. 
\begin{equation}\label{thm_uniq_eq0}
\sup\nolimits_{x\in \Omega}| \mathcal{I}^D(x) - \mathcal{I}^D_L(x) | \le \epsilon
\end{equation}
provided $L$ is large enough. 
\end{manualtheorem}

\begin{proof}
 Consider the function $\Theta_L(x)$ from Lemma \ref{lem_Theta}.
As $\Theta_L(x)>0$, it is increasing with respect to $L$. Then, there is a constant $\rho$ such that $\rho>\Theta_L(x)$, $\forall x\in D$.
Given any $\epsilon>0$, there is an integer $L$ such that $\Theta_L(x)>4\rho \epsilon^{-2}/\pi^2$, $\forall x\notin D$. Define 
\begin{equation}
\label{thm_uniq_eq3}
\mathcal{I}^D_L(x) = 1 - \frac{2}{\pi} \arctan\left( \frac{\pi\epsilon}{2\rho} \Theta_L(x) \right)
\end{equation}
Note the fundamental inequality $z > \arctan(z) \ge \frac{\pi}{2} - z^{-1}$, $\forall z>0$. Then, if $x\in D$, there holds
$$
|  \mathcal{I}^D(x) - \mathcal{I}^D_L(x) | = \frac{2}{\pi} \arctan\left( \frac{\pi\epsilon}{2\rho} \Theta_L(x) \right) < \frac{ \epsilon}{\rho} \Theta_L(x)  < \epsilon
$$
if $x\notin D$, there holds
$$
|  \mathcal{I}^D(x) - \mathcal{I}^D_L(x) | = 1 - \frac{2}{\pi} \arctan\left( \frac{\pi\epsilon}{2\rho} \Theta_L(x) \right) \le \frac{4\rho}{ \pi^2 \epsilon \Theta_L(x)} < \epsilon.
$$
Therefore, the function in \eqref{thm_uniq_eq3} fulfills \eqref{thm_uniq_eq0}. 

\end{proof}

\section{Proof of Theorem \ref{thm:approximability}}
\label{appendix:proof-thm-approx}

In presenting Theorem \ref{thm:approximability} in Section \ref{sec:atten}, we use the term ``multiplicative'' to describe the fact that two latent representations are multiplied in the attention mechanism. 
In contrast, no such operation exists in, e.g., a pointwise FFN or a convolution layer. 
Heuristically speaking, the main result in Theorem \ref{thm:approximability} states that the output latent representations can be of a higher ``frequency'' than the input if the neural network architecture has ``multiplicative'' layers in it. The input latent representations are discretizations of certain functions, and they are combined using matrix dot product as the one used in attention. 
Suppose that this discretization can represent functions of such a frequency with a certain approximation error, the resulting matrix/tensor can be an approximation of a function with a higher frequency than the existing latent representation under the same discretization. Please see Figure \ref{fig:latents-cnn} and Figure \ref{fig:latents-uit} for empirical evidence of this phenomenon for the latent representations: with completely smooth input (harmonic extensions), e.g., see Figure \ref{fig:phi-new}, attention-based learner can generate latent representations with multiple peaks and valleys.

\begin{manualtheorem}{2}[Frequency-bootstrapping for multiplicative neural architectures]
Consider $\Omega=(0,\pi)$ which has a uniform discretization of $\{z_i\}_{i=1}^M$ of size $h$, and $v(x) = \sin(ax)$ for some $a\in \mathbb{Z}^+$. Let $N:=a-1\geq 1$ be the number of channels in the attention layer of interest, assume that (i) the current latent representation $\mathbf{p}_h \in \mathbb{R}^{M\times N}$ consists of the discretization of the first $N$ Legendre polynomials $\{p_j(\cdot)\}_{j=1}^N$ such that $(\mathbf{p}_h)_{ij} = p_j(z_i)$, (ii) $\mathbf{p}_h$ is normalized and the normalization weights $\alpha\equiv 1$ in \eqref{eq:attention-kernel}, (iii) the discretization satisfies that $|\sum_{i=1}^M h f(z_i) - \int_{\Omega} f(x) \dd x| \leq  C\,h $. Then, there exists a set of attention weights $\{W^Q,W^K,W^V\}$ such that for $u(x) = \sin(a'x)$ with $\mathbb{Z}^+ \ni a'>a$
\begin{equation}
\|\tilde{u} - u\|_{L^2(\Omega)} \leq C \max\{h,\|\varepsilon\|_{L^{\infty}(\Omega)}\},
\end{equation}
where $\tilde{u}$ and $\tilde{\kappa}(\cdot,\cdot)$ are defined as the output of and the kernel of the attention formulation in \eqref{eq:attention-kernel}, respectively; $\varepsilon(x):= \|\kappa(x,\cdot) - \tilde{\kappa}(x,\cdot)\|_{L^2(\Omega)}$ is the error function for the kernel approximation. 
\end{manualtheorem}

\begin{proof}
Without loss of generality, it is assumed that $a' = a+1$.
The essential technical tools used suggest the validity for any $a'>a>0$ ($a',a\in \mathbb{Z}^+$). Consider a simple non-separable smooth kernel function 
\begin{equation}
\label{eq:thm2-kernel}
\kappa(x,z) := \sin((a+1)(x-z)),
\end{equation}
it is straightforward to verify that for $v(x):= \sin(a x)$, we have $c_1 = 2a/(2a+1)$
\begin{equation}
\label{eq:thm2-1}
   \int_{\Omega} \kappa(x,z) v(x)\dd x =    \int_{\Omega} \sin\big((a+1)(x-z)\big) \sin(ax) \dd x = c_1 \sin\big((a+1)z\big) =: c_1 u(z).
\end{equation}
As is shown, it suffices to show that the matrix multiplication (a separable kernel) in the attention mechanism approximates this non-separable kernel with an error related to the number of channels. To this end, taking the Taylor expansion, centered at a $z_0\in \Omega$ of $\kappa(x, \cdot)$ with respect to the second variable at each $x\in \Omega$, we have
\begin{equation}
\label{eq:thm2-2}
    \kappa_N(x,z) := \sum_{l=1}^{N} \frac{(z-z_0)^{l-1}}{(l-1) !} \frac{\partial^{l-1} \kappa}{\partial z^{l-1}}(x, z_0).  
\end{equation}
It is straightforward to check that
\begin{equation}
\label{eq:thm2-3}
      \|\kappa(x, \cdot) - \kappa_N(x,\cdot)\|_{L^2(\Omega)} \leq c_1 \frac{\sqrt{(\pi-z_0)^{2N+1} + z_0^{2N+1}}}{N!\sqrt{2N+1}}\left\|\frac{\partial^{N} \kappa}{\partial z^{N}}(x, \cdot)\right\|_{L^{\infty}(\Omega)} .
\end{equation}
By the assumptions on $\kappa(\cdot, \cdot)$, and a straightforward computation we have
\begin{equation}
\label{eq:thm2-4}
      \|\kappa(x, \cdot) - \kappa_N(x,\cdot)\|_{L^2(\Omega)} \leq c_2\frac{(\pi)^N}{N!\sqrt{N}} \left\|\frac{\partial^{N} \kappa}{\partial z^{N}}(x, \cdot)\right\|_{L^{\infty}(\Omega)}.
\end{equation}
Next, let $q_l(z):= (z-z_0)^{l-1}/(l-1)!$ and $k_l(x):= \partial^{l-1} \kappa/\partial z^{l-1}(x, z_0)$ for $1\leq l\leq N$, i.e., they form a Pincherle-Goursat (degenerate) kernels \citep[Chapter 11]{1999KressLinear}
\begin{equation}
      \kappa_N(x,z) = \sum_{l=1}^{N} q_l(x) k_l(z).
\end{equation}
By this choice of the latent representation space being the first $N$ Legendre polynomials, $q_l\in \mathbb{Y} := \operatorname{span}
\{p_j\}$, thus there exists a set of weights $\{w^Q_l\in \mathbb{R}^l\}_{l=1}^N$ corresponding to each channel, such that 
\begin{equation}
  q_l(\cdot) = \sum_{j=1}^N w^Q_{l,j} p_j(\cdot) =: \tilde{q}_l(\cdot).
\end{equation}
This is to say, $Q = \mathbf{p}_h W^Q \in \mathbb{R}^{M \times N}$ with the $l$-th column of $Q$ being the discretization of $\tilde{q}_l(\cdot)$. 

For the key matrix, by standard polynomial approximation since $\mathbb{Y}\simeq \mathbb{P}_N(\Omega)$, there exists a set of weights $\{w^K_l\in \mathbb{R}^N\}_{l=1}^N$, such that
\begin{equation}
  k_l(\cdot) \approx \sum_{j=1}^N w^K_{l,j} p_j(\cdot)=: \tilde{k}_l(\cdot),
\end{equation}
i.e., $K = \mathbf{p}_h W^K \in \mathbb{R}^{M\times N}$ with the $l$-th column of $K$ being the discretization of $\tilde{k}_l(\cdot)$. Moreover, it can approximate $k_l(\cdot)$ with the following estimate
\begin{equation}
\label{eq:thm2-5}
      \|k_l(\cdot) - \tilde{k}_l(\cdot)\|_{L^2(\Omega)} \leq c_3 \frac{\pi^N}{2^N (N+1)^N }|k_l(\cdot)|_{H^N(\Omega)}.
\end{equation}
Similarly, without loss of generality, we choose $v(\cdot) := v_1(\cdot)$, which is concatenated to $V$ such that it occupies the first channel of $V$ defined earlier, we have $\{w^V_l\in \mathbb{R}^N\}$ such that 
\begin{equation}
  v(\cdot) \approx \sum_{j=1}^N w^K_{1,j} p_j(\cdot)=: \tilde{v}(\cdot)
\end{equation}
and
\begin{equation}
\label{eq:thm2-6}
      \|v(\cdot) - \tilde{v}(\cdot)\|_{L^2(\Omega)} \leq c_4 \frac{\pi^N}{2^N (N+1)^N }|v(\cdot)|_{H^N(\Omega)}.
\end{equation}
Now, to approximate the frequency bootstrapping in \eqref{eq:thm2-1}, define 
\begin{equation}
\tilde{u}(z) := \int_{\Omega} \tilde{\kappa}_N(x,z) \tilde{v}(x) \dd x,\quad \text{ with }
\tilde{\kappa}_N(x,z) := \sum_{l=1}^N \tilde{q}_l(x) \tilde{k}_l(z). 
\end{equation}
Then, we have for any $z\in \Omega$
\begin{equation}
\begin{aligned}
u(z) - \tilde{u}(z) &= \int_{\Omega} \kappa(x,z) v(x) \dd x - \int_{\Omega} \kappa_N(x,z) \tilde{v}(x) \dd x 
\\
& = \int_{\Omega} \big(\kappa(x,z)-\tilde{\kappa}_N(x,z)\big) v(x) \dd x +  \int_{\Omega} \kappa_N(x,z) \big(v(x)-\tilde{v}(x)\big) \dd x. 
\end{aligned}
\end{equation}
Thus, we have
\begin{equation}
\begin{aligned}
\|u  - \tilde{u}\|_{L^\infty(\Omega)} &\leq  \max_{z\in \Omega} \left\{\int_{\Omega} \left|\big(\kappa(x,z)-\tilde{\kappa}_N(x,z)\big) v(x) \right|\dd x   +  \int_{\Omega} \left|\kappa_N(x,z) \big(v(x)-\tilde{v}(x)\big) \right|\dd x \right\} 
\\
& \leq \max_{z\in \Omega} \Big\{\underbrace{\|\kappa(x, \cdot) - \tilde{\kappa}_N(x,\cdot)\|_{L^2(\Omega)}}_{(*)} \|v\|_{L^2(\Omega)}
+ \|\tilde{\kappa}_N(x,\cdot)\|_{L^2(\Omega)}  \|v - \tilde{v}\|_{L^2(\Omega)} \Big\}.
\end{aligned}
\end{equation}
Now, by triangle inequality, $q_l=\tilde{q}_l$, the definitions above \eqref{eq:thm2-kernel} implying $|k_l(\cdot)|_{H^N(\Omega)}\leq c\, 2^N$ and $\|q_l\|_{L^2(\Omega)} \leq c \pi^N/(\sqrt{N}N!) $, and the estimate in \eqref{eq:thm2-5}
\begin{equation}
\label{eq:thm2-kernel-estimate}
\begin{aligned}
(*) & \leq \|\kappa(x, \cdot) - \kappa_N(x,\cdot)\|_{L^2(\Omega)} + 
\|\kappa_N(x, \cdot) - \tilde{\kappa}_N(x,\cdot)\|_{L^2(\Omega)}
\\
& \leq \|\kappa(x, \cdot) - \kappa_N(x,\cdot)\|_{L^2(\Omega)}
+ \sum_{l=1}^N \|q_l\|_{L^2(\Omega)} \|k_l  - \tilde{k}_l \|_{L^2(\Omega)}
\\
& \leq c_5\left(\frac{(2\pi)^N}{N!\sqrt{N}} + c_6 \frac{\pi^N}{2^N (N+1)^N } \sum_{l=1}^N \|q_l\|_{L^2(\Omega)} |k_l(\cdot)|_{H^N(\Omega)} \right)
\\
& \leq c_6 \frac{(2\pi)^N}{N! \sqrt{N} }.
\end{aligned}
\end{equation}
Notice this is the same order with the estimate of $\|\kappa(x, \cdot) - \kappa_N(x,\cdot)\|_{L^2(\Omega)}$. For the term $\|\tilde{\kappa}_N(x,\cdot)\|_{L^2(\Omega)}$, a simple triangle inequality trick can be used:
\begin{equation}
\begin{aligned}
\|\tilde{\kappa}_N(x,\cdot)\|_{L^2(\Omega)} 
& \leq 
\|{\kappa}_N(x,\cdot)\|_{L^2(\Omega)} + \|\kappa_N(x, \cdot) - \tilde{\kappa}_N(x,\cdot)\|_{L^2(\Omega)} 
\\
&\leq \|{\kappa}(x,\cdot)\|_{L^2(\Omega)} + \|\kappa_N(x, \cdot) - \tilde{\kappa}_N(x,\cdot)\|_{L^2(\Omega)}, 
\end{aligned}
\end{equation}
which can be further estimated by reusing the argument in \eqref{eq:thm2-kernel-estimate}.

Lastly, using the following argument and the estimate for $\|u  - \tilde{u} \|_{L^\infty(\Omega)}$ yield the desired result:
\begin{equation}
\|u  - \tilde{u} \|_{L^2(\Omega)}^2 \leq \|u  - \tilde{u} \|_{L^1(\Omega)}\|u  - \tilde{u} \|_{L^\infty(\Omega)} \leq 2 \max |u|\, \|u  - \tilde{u} \|_{L^\infty(\Omega)}^2.
\end{equation}
\end{proof}

\begin{figure}[htbp]
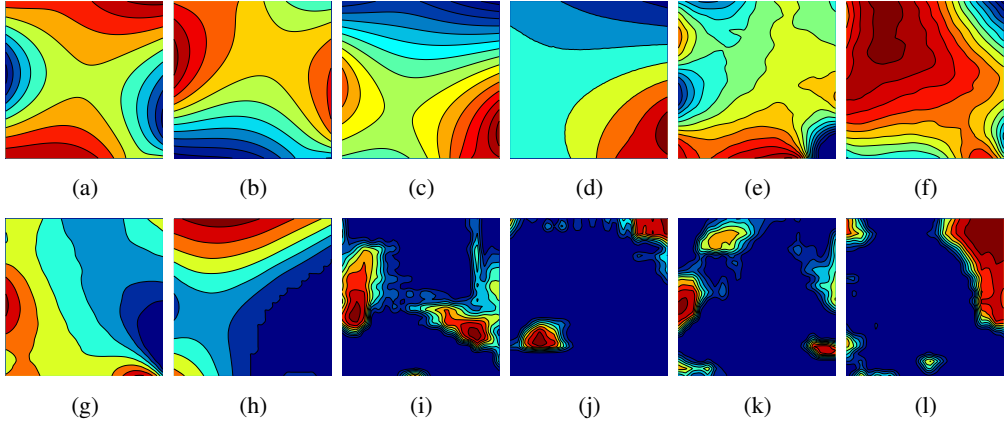

\begin{center}
\begin{subfigure}[b]{0.16\linewidth}
  \centering
  \includegraphics[width=0.95\linewidth]{/latents/latent-cnn-l1-c23.pdf}
  \caption{\label{fig:cnn-l1-1}}
\end{subfigure}%
\begin{subfigure}[b]{0.16\linewidth}
  \centering
  \includegraphics[width=0.95\linewidth]{/latents/latent-cnn-l1-c45.pdf}
  \caption{\label{fig:cnn-l1-2}}
\end{subfigure}%
\begin{subfigure}[b]{0.16\linewidth}
  \centering
  \includegraphics[height=0.95\linewidth]{/latents/latent-cnn-l1-c50.pdf}
  \caption{\label{fig:cnn-l1-3}}
\end{subfigure}%
\begin{subfigure}[b]{0.16\linewidth}
  \centering
  \includegraphics[height=0.95\linewidth]{/latents/latent-cnn-l1-c64.pdf}
  \caption{\label{fig:cnn-l1-4}}
\end{subfigure}%
\begin{subfigure}[b]{0.16\linewidth}
  \centering
  \includegraphics[width=0.95\linewidth]{/latents/latent-cnn-l3-c39.pdf}
  \caption{\label{fig:cnn-l3-1}}
\end{subfigure}%
\begin{subfigure}[b]{0.16\linewidth}
  \centering
  \includegraphics[width=0.95\linewidth]{/latents/latent-cnn-l3-c40.pdf}
  \caption{\label{fig:cnn-l3-2}}
\end{subfigure}%
\\[5pt]
\begin{subfigure}[b]{0.16\linewidth}
  \centering
  \includegraphics[width=0.95\linewidth]{/latents/latent-cnn-l3-c86.pdf}
  \caption{\label{fig:cnn-l3-3}}
\end{subfigure}%
\begin{subfigure}[b]{0.16\linewidth}
  \centering
  \includegraphics[width=0.95\linewidth]{/latents/latent-cnn-l3-c96.pdf}
  \caption{\label{fig:cnn-l3-4}}
\end{subfigure}%
\begin{subfigure}[b]{0.16\linewidth}
  \centering
  \includegraphics[width=0.95\linewidth]{/latents/latent-cnn-l4-c25.pdf}
  \caption{\label{fig:cnn-l4-1}}
\end{subfigure}%
\begin{subfigure}[b]{0.16\linewidth}
  \centering
  \includegraphics[width=0.95\linewidth]{/latents/latent-cnn-l4-c33.pdf}
  \caption{\label{fig:cnn-l4-2}}
\end{subfigure}%
\begin{subfigure}[b]{0.16\linewidth}
  \centering
  \includegraphics[width=0.95\linewidth]{/latents/latent-cnn-l4-c159.pdf}
  \caption{\label{fig:cnn-l4-3}}
\end{subfigure}%
\begin{subfigure}[b]{0.16\linewidth}
  \centering
  \includegraphics[width=0.95\linewidth]{/latents/latent-cnn-l4-c168.pdf}
  \caption{\label{fig:cnn-l4-4}}
\end{subfigure}%
\end{center}
\vspace*{-0.1in}
\caption{The latent representations from CNN-based UNet when evaluating for the sample in Figure \ref{fig:phi-new}, in their respective layers, 4 latent representations are extracted from 4 randomly selected channels: \subref{fig:cnn-l1-1}--\subref{fig:cnn-l1-4} are from the feature extracting layer (layer 1, $128\times 128$ grid), \subref{fig:cnn-l3-1}--\subref{fig:cnn-l3-4} are from the middle layer acting on the coarsest level ($32\times 32$ grid), \subref{fig:cnn-l4-1}--\subref{fig:cnn-l4-4} are from the next level ($64\times 64$ grid).}
\label{fig:latents-cnn}
\end{figure}

\begin{figure}[htbp]
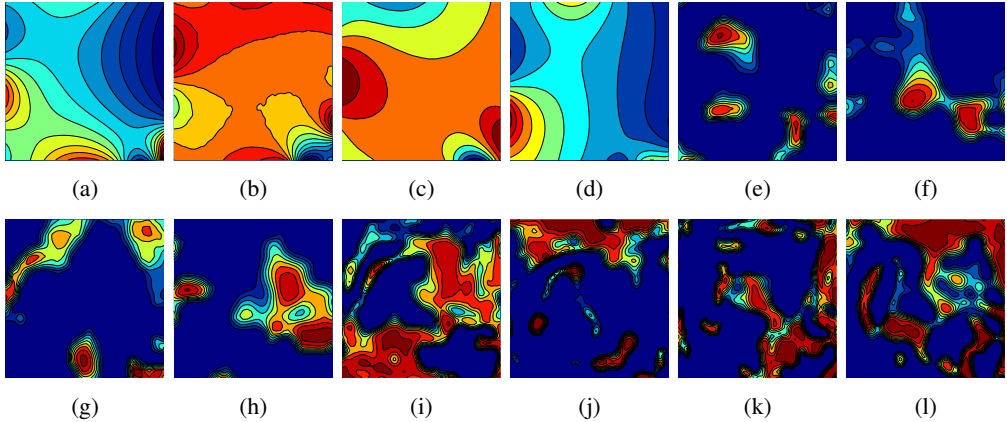

\begin{center}
\begin{subfigure}[b]{0.16\linewidth}
  \centering
  \includegraphics[width=0.95\linewidth]{/latents/latent-l1-c8.pdf}
  \caption{\label{fig:uit-l1-1}}
\end{subfigure}%
\begin{subfigure}[b]{0.16\linewidth}
  \centering
  \includegraphics[width=0.95\linewidth]{/latents/latent-l1-c43.pdf}
  \caption{\label{fig:uit-l1-2}}
\end{subfigure}%
\begin{subfigure}[b]{0.16\linewidth}
  \centering
  \includegraphics[height=0.95\linewidth]{/latents/latent-l1-c49.pdf}
  \caption{\label{fig:uit-l1-3}}
\end{subfigure}%
\begin{subfigure}[b]{0.16\linewidth}
  \centering
  \includegraphics[height=0.95\linewidth]{/latents/latent-l1-c51.pdf}
  \caption{\label{fig:uit-l1-4}}
\end{subfigure}%
\begin{subfigure}[b]{0.16\linewidth}
  \centering
  \includegraphics[width=0.95\linewidth]{/latents/latent-l4-c91.pdf}
  \caption{\label{fig:uit-l3-1}}
\end{subfigure}%
\begin{subfigure}[b]{0.16\linewidth}
  \centering
  \includegraphics[width=0.95\linewidth]{/latents/latent-l4-c129.pdf}
  \caption{\label{fig:uit-l3-2}}
\end{subfigure}%
\\[5pt]
\begin{subfigure}[b]{0.16\linewidth}
  \centering
  \includegraphics[width=0.95\linewidth]{/latents/latent-l4-c156.pdf}
  \caption{\label{fig:uit-l3-3}}
\end{subfigure}%
\begin{subfigure}[b]{0.16\linewidth}
  \centering
  \includegraphics[width=0.95\linewidth]{/latents/latent-l4-c250.pdf}
  \caption{\label{fig:uit-l3-4}}
\end{subfigure}%
\begin{subfigure}[b]{0.16\linewidth}
  \centering
  \includegraphics[width=0.95\linewidth]{/latents/latent-l5-c1.pdf}
  \caption{\label{fig:uit-l4-1}}
\end{subfigure}%
\begin{subfigure}[b]{0.16\linewidth}
  \centering
  \includegraphics[width=0.95\linewidth]{/latents/latent-l5-c4.pdf}
  \caption{\label{fig:uit-l4-2}}
\end{subfigure}%
\begin{subfigure}[b]{0.16\linewidth}
  \centering
  \includegraphics[width=0.95\linewidth]{/latents/latent-l5-c85.pdf}
  \caption{\label{fig:uit-l4-3}}
\end{subfigure}%
\begin{subfigure}[b]{0.16\linewidth}
  \centering
  \includegraphics[width=0.95\linewidth]{/latents/latent-l5-c128.pdf}
  \caption{\label{fig:uit-l4-4}}
\end{subfigure}%
\end{center}
\vspace*{-0.1in}
\caption{The latent representations from UIT when evaluating for the sample in Figure \ref{fig:phi-new}, in their corresponding layers with respect to those in Figure \ref{fig:latents-cnn}, 4 latent representations are extracted from 4 randomly selected channels: \subref{fig:uit-l1-1}--\subref{fig:uit-l1-4} are from the feature extracting layer (layer 1, $128\times 128$ grid), \subref{fig:uit-l3-1}--\subref{fig:uit-l3-4} are from the middle layer acting on the coarsest level ($32\times 32$ grid), \subref{fig:uit-l4-1}--\subref{fig:uit-l4-4} are from the next level ($64\times 64$ grid).}
\label{fig:latents-uit}
\end{figure}

\newpage
\section{Limitations, Extensions, and Future work}
\label{appendix:limitations}
In this study, the $\sigma$ to be recovered relies on a piecewise constant assumption. This assumption is commonly seen in the theoretical study of the original DSM. For many EIT applications in medical imaging and industrial monitoring, $\sigma$ may involve non-sharp transitions or even contain highly anisotropic/multiscale behaviors making it merely an $L^{\infty}$ function. If the boundary data pairs are still quite limited, i.e., only a few electric modes are placed on the boundary $\partial\Omega$, the proposed model alone is not expected to perform as well as in benchmark problems. Nevertheless, it can still contribute to achieving reconstruction with satisfactory accuracy, if certain a priori knowledge of the problem is accessible. End2end-wise, our proposed method has limitations like other operator learners: the data manifold on which the operator is learned is assumed to exhibit low-dimensional/low-rank attributes. The behavior of the operator of interest on a compact subset is assumed to be reasonably well approximated by a finite number of bases. Therefore, for non-piecewise constant conductivities, the modification can be to employ a suitable data set, in which the sampling of $\{\sigma^{(k)}\}$ represents the true $\sigma$'s distribution a posteriori to a certain degree. However, to reconstruct non-piecewise constant conductivities, more boundary data pairs or even the entire NtD map is demanded from a theoretical perspective \citep{2006Astala,1996Nachman,1984KohnVogelius,1987SylvesterUhlmann}. For fewer data pairs and more complicated conductivity set-up, there have been efforts in this direction hierarchically using matrix completion \citep{2021ThanhLiZepeda} to recover $\Lambda_{\sigma}$. When $\Lambda_{\sigma}$ is indeed available, $\sigma$ can be described by a Fredholm integral equation, see \citet[Theorem 4.1]{1996Nachman}, which itself is strongly related to the modified attention mechanism \eqref{eq:attention-kernel} of the proposed Transformer. The architectural resemblance may lead to future explorations in this direction. Optimization with regularization can be applied for the instance of interest (fine-tune) from the perspective of improving the reconstruction for a single instance. This approach dates back to the classical iterative methods involving adaptivity \citep{2019JinXuAdaptive}. Recent novel DL-inspired adaptions \citep{2021LiZhengKovachkiEtAlPhysics,2023BenitezFuruyaFaucherEtAlFine} re-introduce this type of method. In fine-tuning, the initial guess is the reconstruction by the operator learner trained in the end2end pipeline (pre-train).

\end{document}